\theoremstyle{plain}
\newtheorem{theorem}{Theorem}[section]
\newtheorem{lemma}[theorem]{Lemma}
\newtheorem{corollary}[theorem]{Corollary}
\theoremstyle{definition}
\newtheorem{definition}[theorem]{Definition}
\newtheorem{assumption}[theorem]{Assumption}
\theoremstyle{remark}
\newtheorem{remark}[theorem]{Remark}
\newtheorem{fact}[theorem]{Fact}
\icmltitlerunning{Understanding the Training Speedup from Sampling with Approximate Losses}
\begin{document}

\twocolumn[
\icmltitle{Understanding the Training Speedup from Sampling with Approximate Losses}




\begin{icmlauthorlist}
\icmlauthor{Rudrajit Das}{ut}
\icmlauthor{Xi Chen}{am}
\icmlauthor{Bertram Ieong}{am}
\icmlauthor{Parikshit Bansal}{ut}
\icmlauthor{Sujay Sanghavi}{ut,am}
\end{icmlauthorlist}

\icmlaffiliation{ut}{UT Austin}
\icmlaffiliation{am}{Amazon}

\icmlcorrespondingauthor{Rudrajit Das}{rdas@utexas.edu}
\icmlcorrespondingauthor{Xi Chen}{xichex@amazon.com}
\icmlcorrespondingauthor{Sujay Sanghavi}{sanghavi@mail.utexas.edu}

\icmlkeywords{Machine Learning, ICML}

\vskip 0.3in
]



\printAffiliationsAndNotice{}  

\begin{abstract}
It is well known that selecting samples with large losses/gradients can significantly reduce the number of training steps. However, the selection overhead is often too high to yield any meaningful gains in terms of overall training time. In this work, we focus on the greedy approach of selecting samples with large \textit{approximate losses} instead of exact losses in order to reduce the selection overhead. For smooth convex losses, we show that such a greedy strategy can converge to a constant factor of the minimum value of the average loss in fewer iterations than the standard approach of random selection. We also theoretically quantify the effect of the approximation level.
We then develop SIFT which uses early exiting to obtain approximate losses with an intermediate layer's representations for sample selection. We evaluate SIFT on the task of training a 110M parameter 12 layer BERT base model, and show significant gains (in terms of training hours and number of backpropagation steps) without any optimized implementation over vanilla training. For e.g., to reach 64\% validation accuracy, SIFT with exit at the first layer takes $\sim$ 43 hours compared to $\sim$ 57 hours of vanilla training. 
\end{abstract}

\section{Introduction}
\label{sec:intro}
Stochastic Gradient Descent (SGD) and its variants are the algorithms of choice for solving large-scale optimization problems that arise in training machine learning models. These are problems of the form $\min_{\bm{w} \in \mathbb{R}^d} F(\bm{w})$, where $F(.)$ is the expected population loss and $\bm{w} \in \mathbb{R}^d$ is the vector of model parameters. More specifically, if $f(\bm{w}, .)$ is the per-sample loss and $\mathcal{D}$ is the data distribution, then $F(\bm{w}) = \mathbb{E}_{\bm{x} \sim \mathcal{D}}[f(\bm{w}, \bm{x})]$. The standard SGD update rule at $\bm{w}$ with step-size/learning rate $\eta$ is:
\begin{equation}
    \label{eq:1-sgd}
    \bm{w}_{+} = \bm{w} - \eta \nabla f(\bm{w}, \bm{x}),
\end{equation}
where the sample/example $\bm{x}$ is drawn from $\mathcal{D}$. In practice, the gradient over a single sample is replaced by the average gradient over a mini-batch of samples; for our theoretical results, we shall stick to batch-size = 1 (as in \cref{eq:1-sgd}).

The choice of the sample $\bm{x}$ in \cref{eq:1-sgd} significantly impacts the speed of convergence. 
There is copious work on selecting \enquote{important} samples for speeding up convergence. In this paradigm, the seminal idea is that of \textit{importance sampling} which proposes to sample the examples such that resulting stochastic gradient is unbiased while having the minimum possible variance \citep{zhao2015stochastic, alain2015variance, needell2014stochastic}. It turns out that the optimal solution is to sample the examples with probability proportional to their per-sample gradient norms. While this has a clean theoretical solution, it is completely infeasible from the practical standpoint because of the high cost of computing the per-sample gradient norms. To address this shortcoming, several approximations to this exact solution have been proposed; we discuss these and related ideas in \Cref{rel-wrk}. In this space, one high-level idea is to use the per-sample loss value as a proxy to the per-sample gradient norm, favoring samples with high losses for performing the update 
\citep{loshchilov2015online,shrivastava2016training,katharopoulos2017biased,kawaguchi2019ordered,zhang2019autoassist}. We discuss the important works based on this idea in detail in \Cref{rel-wrk}. Since exact loss computation is itself expensive especially for large models, some works rely on \textit{approximate} loss values for sample selection; for instance, using an auxiliary model to predict loss values of the actual model being trained \cite{katharopoulos2017biased,zhang2019autoassist}. So in summary, several approximations to the core idea of exact importance sampling have been proposed to make it practically usable.

Despite the myriad sample selection approaches, there is a paucity of theoretical results quantifying how much speed-up we can obtain over random sampling and their limitations -- especially for approaches involving the use of approximate quantities. In this work, we focus on the approach of choosing the sample with the highest loss for performing the gradient-based update. More specifically, suppose we are given $R > 1$ i.i.d. samples drawn from $\mathcal{D}$ and their \textbf{approximate} loss values with the constraint that we can pick only \textbf{one} sample by inspecting the \textit{approximate} losses to perform the update. We consider the \textit{greedy} approach of selecting the sample with the \textit{largest approximate loss}, and we call this \textbf{greedy SGD (GSGD)}. Also, we will refer to the default approach of picking a sample uniformly at random as \textit{(vanilla) SGD}. In \Cref{theory-sec}, we theoretically compare the convergence rates of GSGD (with \textit{approximate} losses) against SGD, characterizing the benefits and limitations of the former. We would like to clarify that \textit{we are not claiming that GSGD is a novel algorithm}; in fact, it is very similar to OSGD \cite{kawaguchi2019ordered} in spirit.\footnote{We do not call it OSGD because \citet{kawaguchi2019ordered} use exact loss values whereas we use approximate loss values. Moreover, \citet{kawaguchi2019ordered} consider the finite-sum (ERM) setting whereas we focus on the stochastic setting.} Rather, \textit{we are claiming that our theoretical characterization of its benefits and limitations -- with approximate losses -- is the first of its kind}, to the best of our knowledge. 

On the applied side, we propose to use \textit{early exiting} \citep{7900006,schwartz-etal-2020-right} as a light-weight way to obtain \textit{approximate} losses for sample selection in training large ML models. 
To be clear, \textit{our novelty is in the light-weight filtering process for training via early exiting}. 
{Empirical results on a 110M parameter BERT base model and a ResNet-50 model show that early exit-based sample selection yields significant improvements.}

We will now elaborate on our main {\bf contributions}.
\\
{\bf (a)} In \Cref{theory-sec}, we theoretically compare \textbf{greedy SGD (GSGD)} (i.e., pick the sample with the \textit{highest approximate loss}) against vanilla SGD (i.e., pick a sample at random). 
\begin{itemize}
    \item \Cref{thm-gsgd} provides a convergence bound for GSGD on smooth convex losses assuming that the argmax of the approximate losses is equal to the argmax of the actual losses (\Cref{asmp-order}). In \Cref{sec:approx-fn-val}, we consider a setting wherein \Cref{asmp-order} no longer holds. We quantify the degradation in the performance of GSGD in this setting; see Theorems \ref{thm-gsgd-2} and \ref{thm-approx-fn-val-jan21}.
    \item A \textbf{key insight} is that \textit{GSGD can converge to a (problem-dependent) constant factor of $F^{*} = \min_{\bm{w}} F(\bm{w})$ in fewer iterations than SGD}; see \Cref{main-insight}. This is of interest in training large ML models on extremely large datasets, where converging exactly to $F^{*}$ is infeasible and instead converging \textit{faster} to $\mathcal{O}(F^{*})$ is desirable. On the negative side, our result indicates that GSGD may not converge to $F^{*}$ asymptotically and hence it can be worse than SGD asymptotically (\Cref{gsgd-bad}).
\end{itemize}
\noindent {\bf (b)} 
In \Cref{early-exit}, we propose \textit{early exiting} as a way to cheaply obtain approximate losses with an intermediate layer's representations for sample selection in training large ML models such as \textit{transformers}. Specifically, we propose to do backpropagation on only 50\% of the samples in a batch with the \textit{highest approximate losses obtained via early exiting}. To our knowledge, early exiting has not been studied for accelerating \textit{training}. We call this early-exit based sample \enquote{sifting} process SIFT. This can be seamlessly integrated with other sample selection schemes; in particular, we also try large \textit{entropy}-based sample selection with early exiting.
\begin{itemize}
    \item In \Cref{sec:bert}, we show the efficacy of SIFT in training a 12 layer BERT base model with 110M parameters from scratch. 
    Specifically, to achieve 64\% validation accuracy, loss-based and entropy-based SIFT with exit at the first layer take roughly 43 and 40 hours, respectively, compared to roughly 57 hours of vanilla training involving no sample selection. 
    Further, in \Cref{resnet-50}, we show that SIFT is also very effective in speeding up the training of a modified ResNet-50 model that is amenable to early exiting. 
    \item In \Cref{early-exit-theory}, we quantify the probability of correctly selecting the sample with the largest \textit{actual loss} using early exiting for feedforward linear neural networks. 
\end{itemize}

\section{Related Work}
\label{rel-wrk}
There is a large body of work proposing several kinds of sample-selection schemes for accelerated training. These include importance sampling methods, sample reordering based approaches and algorithms focusing on samples with higher losses. Our approach falls in the last category.

{\bf Optimal importance sampling.} {Importance sampling} asks how should the training examples be sampled so as to obtain an unbiased stochastic gradient with the minimum possible variance. \citet{zhao2015stochastic, alain2015variance, needell2014stochastic} show that the optimal solution to this problem is to sample the examples with probability proportional to their per-sample gradient norms. Unfortunately, this is completely infeasible in practice because of the high cost of computing the per-sample gradient norms.

{\bf Approximate importance sampling.} Several papers attempt to approximate the optimal importance sampling procedure so as to make it feasible. For convex settings, \citet{zhao2015stochastic, needell2014stochastic} propose sampling with probability proportional to the smoothness constant of the per-sample losses, while \citet{borsos2018online,stich2017safe} propose adaptive sampling strategies. Going beyond convex settings, \citet{alain2015variance} present a distributed approach for importance sampling, while \citet{katharopoulos2017biased} approximate the importance weights with loss values which are predicted by a smaller network. 
\citet{katharopoulos2018not} derive an upper bound on the per-sample gradient norms for deep-learning networks which take essentially the same time to compute as the per-sample loss values, and propose using the upper bounds for approximating the importance weights. \citet{johnson2018training} approximate the true sampling distribution by solving a robust optimization problem. 

{\bf Sample reordering.} Another related line of work attempts to improve the way/order in which samples are presented while training. \citet{bengio2009curriculum} propose curriculum learning wherein the key idea is to present easier examples before harder ones but this requires prior information about the training set. Several improved modifications of this idea are out there \citep{tsvetkov2016learning,kumar2010self,jiang2017mentornet,kim2018screenernet,zhang2019autoassist,jiang2019accelerating}.
There is also a long line of papers that theoretically analyze conventional sample ordering schemes (such as shuffle once, random reshuffling, etc.) as well as improved sample ordering schemes \citep{bertsekas2011incremental,recht2012toward,gurbuzbalaban2019convergence,gurbuzbalaban2021convergence,haochen2019random,safran2020good,mishchenko2020random,lu2022general,mohtashami2022characterizing}. 

{\bf Selection of samples with large losses.} As mentioned in \Cref{sec:intro}, GSGD is \textit{not} a novel algorithm and there are prior approaches with the same underlying principle as GSGD, i.e., focus on the samples with the largest loss values.
The two algorithms closest to GSGD in spirit are online hard example mining (OHEM) proposed by \citet{shrivastava2016training} and ordered SGD (OSGD) proposed by \citet{kawaguchi2019ordered} -- \textit{except that both algorithms use exact losses} for sample selection. 
OHEM was proposed for training object detectors (in computer vision) and it involves back-propagating using the gradients of only the samples with the $k$ largest losses in batch of size $b$ with $k < b$. However, \citet{shrivastava2016training} do not provide any theoretical guarantees. OSGD \cite{kawaguchi2019ordered} is essentially the same algorithm as OHEM \cite{shrivastava2016training}, but with convergence guarantees and generalization bounds. However, \textit{unlike our theoretical results} for GSGD, \citet{kawaguchi2019ordered} do \textit{not} show how/when/to what extent ordered SGD is better than plain SGD. So even though GSGD is similar to OSGD in spirit, our theoretical results (for GSGD) are much more comprehensive.
\citet{loshchilov2015online} propose a sampling strategy which favors picking examples with larger losses. \citet{fan2017learning} introduce the average top-$k$ loss and advocate minimizing this loss rather than the empirical average over all the samples. 

{\bf Early Exiting.} Early exiting \citep{7900006,schwartz-etal-2020-right} is a promising approach to decreasing the computational cost of multilayered neural architectures by approximating the output of a model through its intermediate feature representations. This saves computational costs by dynamically deciding the number of layers/modules (attention-blocks in transformers) to use during inference by \textit{exiting} based on some metric computed on the intermediate representations themselves. Initially used for ResNets \citep{7900006}, early exiting is now widely popular even for transformer models, especially language models \citep{schwartz-etal-2020-right,xin-etal-2020-deebert,schuster2022confident, rotem2023finding,berxit,leebert}. Recent work around early exiting for large language models also explores accelerated decoding \citep{schuster2022confident} and improving factuality \citep{chuang2023dola}. In this work, we solely use early exiting as a light-weight way to obtain approximate losses with the intermediate layer representations for sample selection. To our knowledge, \textit{early exiting has not been used in prior work for speeding up training}.

\section{Notation}
\label{notation}
Vectors and matrices are in bold font. For a natural number $n$, we sometimes denote the set $\{1,\ldots,n\}$ by $[n]$. The (Gauss) error function and complementary error function are defined as:
\begin{equation}
    \label{erf}
    \text{erf}(t) := \frac{2}{\sqrt{\pi}}\int_{0}^t e^{-z^2} {d}z \text{ and } \text{erfc}(t) := 1 - \text{erf}(t).
\end{equation}
Note that $\lim_{t \to \infty} \text{erf}(t) = 1$. For any $z \in \mathbb{R}$, we define the sigmoid function as $\text{sig}(z) = \frac{1}{1 + e^z}$.

\section{Problem Setting}
\label{sec:oracle}
We briefly recap the optimization problem introduced in \Cref{sec:intro}. Given access to a first-order optimization oracle, we would like to minimize $F(\bm{w}) := \mathbb{E}_{\bm{x} \sim \mathcal{D}}[f(\bm{w}, \bm{x})]$, where $\bm{w} \in \mathbb{R}^d$ and $\mathcal{D}$ is the data distribution.

\textbf{Standard First-Order Stochastic Optimization Oracle.} A query at $\bm{w}$ returns $\nabla_{\bm{w}} f(\bm{w}, \bm{x})$, where $\bm{x} \sim \mathcal{D}$. 

We consider a more \textit{generous} oracle. However, it can also perform \textbf{one gradient evaluation}  per query (same as the standard oracle).
\begin{definition}[\textbf{Proposed First-Order Stochastic Optimization Oracle}]
    \label{new-oracle}
    A query at $\bm{w}$ first returns a set of $R > 1$ samples $\{\bm{x}^{(1)}, \ldots, \bm{x}^{(R)}\} = \mathcal{S}^{(R)}$ drawn i.i.d. from $\mathcal{D}$ and their \textit{approximate} function\footnote{Throughout this work, we interchange \enquote{loss} value and \enquote{function} value freely. In most cases, we use \enquote{function} value in the context of optimization-based discussions and \enquote{loss} value in the context of ML-based discussions.} values $\{\widetilde{f}(\bm{w}, \bm{x}^{(1)}), \ldots, \widetilde{f}(\bm{w}, \bm{x}^{(R)})\}$. The user can then pick \textbf{one} $\widehat{\bm{x}}$ from $\mathcal{S}^{(R)}$, and the oracle will return $\nabla_{\bm{w}} f(\bm{w}, \hat{\bm{x}})$.
\end{definition}
Later, we shall make an assumption on the relation between the approximate function value $\widetilde{f}(\bm{w}, \bm{x})$ and the actual function value ${f}(\bm{w}, \bm{x})$.
Now that we have introduced the oracle that we consider in this work, we state the $\widehat{\bm{x}}$ chosen by vanilla SGD and {greedy SGD (GSGD)}.

\textbf{SGD choice:} Pick $\widehat{\bm{x}}$ uniformly at random from $\mathcal{S}^{(R)}$. 

\textbf{GSGD choice:} Pick $\widehat{\bm{x}} = \text{arg max}_{\bm{x} \in \mathcal{S}^{(R)}} \widetilde{f}(\bm{w}, \bm{x})$.

We state the GSGD update rule in more detail next.

\subsection{Greedy SGD (GSGD) Algorithm}
In the $k^\text{th}$ iteration, we observe a set of $R$ i.i.d. samples drawn from $\mathcal{D}$, say $\mathcal{S}_k^{(R)} = \{\bm{x}_k^{(1)}, \ldots, \bm{x}_k^{(R)}\}$. We pick:
\begin{equation}
    \label{eq:gsgd-choice}
    \widehat{\bm{x}}_k = \text{arg max}_{\bm{x} \in \mathcal{S}_k^{(R)}} \widetilde{f}(\bm{w}_k, \bm{x}).
\end{equation}
The update of Greedy SGD (GSGD) with step-size $\eta_k$ is:
\begin{equation}
    \label{eq:gsgd-update}
    \bm{w}_{k+1} = \bm{w}_k - \eta_k \nabla f(\bm{w}_k, \widehat{\bm{x}}_k).
\end{equation}
The update of vanilla SGD is the same as \cref{eq:gsgd-update}, except with $\widehat{\bm{x}}_k$ being a random sample from $\mathcal{S}_k^{(R)}$.

\section{GSGD vs. SGD for Smooth Convex Objectives}
\label{theory-sec}
We begin by stating some assumptions and definitions.
\begin{assumption}[\textbf{Continuity}]
    \label{asmp-cont}
    For any $\bm{x} \sim \mathcal{D}$, $f(\bm{w}, \bm{x})$ is continuous w.r.t. $\bm{w}$.
\end{assumption}
\begin{assumption}[\textbf{Convexity}]
    \label{asmp-cvx}
    For any $\bm{x} \sim \mathcal{D}$, $f(\bm{w}, \bm{x})$ is convex w.r.t. $\bm{w}$.
\end{assumption}
\begin{assumption}[\textbf{Smoothness}]
    \label{asmp-smooth}
    For any $\bm{x} \sim \mathcal{D}$, $f(\bm{w}, \bm{x})$ is $L$-smooth w.r.t. $\bm{w}$.
\end{assumption}
\begin{assumption}
    \label{asmp-1}
    For any $\bm{x} \sim \mathcal{D}$, $\min_{\bm{w} \in \mathbb{R}^d} f(\bm{w}, \bm{x}) = 0$. 
\end{assumption}
Let $\Phi_F := \text{arg min}_{\bm{w} \in \mathbb{R}^d} F(\bm{w})$ and $F^{*} := \min_{\bm{w} \in \mathbb{R}^d} F(\bm{w})$. We restrict our attention to the case of $\Phi_F$ being closed and compact. 

Let us first consider the case where the argmax of the approximate function values is the same as the argmax of the exact function values; we will relax this assumption later in \Cref{sec:approx-fn-val}.
\begin{assumption}[\textbf{Approximate function values preserve argmax}]
    \label{asmp-order}
    In the setting of \Cref{new-oracle}, $\widetilde{f}$ satisfies:
    $$\text{arg max}_{\bm{x} \in \mathcal{S}^{(R)}} \widetilde{f}(\bm{w}, \bm{x}) = \text{arg max}_{\bm{x} \in \mathcal{S}^{(R)}} {f}(\bm{w}, \bm{x}).$$
\end{assumption}
Under \Cref{asmp-order}, $\widehat{\bm{x}}_k$ in \cref{eq:gsgd-choice} becomes the same as $\text{arg max}_{\bm{x} \in \mathcal{S}_k^{(R)}} {f}(\bm{w}_k, \bm{x})$. 
\begin{definition}
    \label{rho-def}
    For $R > 1$, let $$\widehat{F}_R(\bm{w}) := \mathbb{E}_{\{\bm{x}^{(j)}\}_{j=1}^R \text{ } \underset{\textup{iid}}{\sim} \text{ } \mathcal{D}}\Big[\max_{\text{ }\bm{x} \in \{\bm{x}^{(j)}\}_{j=1}^R}f(\bm{w}, \bm{x})\Big].$$ 
    Define $$\rho_R(\bm{w}) := \frac{\widehat{F}_R(\bm{w})}{F(\bm{w})} \text{ and } \rho_R^{*} := \inf_{\bm{w} \notin \Phi_F} \rho_R(\bm{w}).$$
    Also, suppose $$\sup_{\bm{w}^{*} \in \Phi_F}\widehat{F}_R(\bm{w}^{*}) \leq \Delta_R.$$
\end{definition}
Except for the trivial case of $f(\bm{w}, \bm{x}_1) = f(\bm{w}, \bm{x}_2)$ $\forall$ $\bm{x}_1, \bm{x}_2$ which we disregard, $\rho_R^{*}$ is \textit{strictly} bigger than 1. 

Further, consider a point $\widehat{\bm{w}}^{*} \notin \Phi_F$ that is $\epsilon$-close to some $\bm{w}^{*} \in \Phi_F$ (i.e., $\|\widehat{\bm{w}}^{*} - \bm{w}^{*}\|_2 \leq \epsilon$) and let $\epsilon \to 0$. In that case, under Assumption \ref{asmp-cont} (and because the max operation preserves continuity) and \Cref{rho-def}, $\widehat{F}_R(\widehat{\bm{w}}^{*}) \to \widehat{F}_R({\bm{w}}^{*}) \leq \Delta_R$. Also, $F(\widehat{\bm{w}}^{*}) \geq F^{*}$. Thus, by definition, $\rho_R^{*} \leq {\widehat{F}_R(\widehat{\bm{w}}^{*})}/{F(\widehat{\bm{w}}^{*})} \leq ({\Delta_R}/{F^{*}})$. Hence, we have that:
\begin{equation}
    \label{eq:rho}
    1 < \rho_R^{*} \leq \frac{\Delta_R}{F^{*}}, \text{ } \forall \text{ } R > 1.
\end{equation}
In \Cref{quantify-rho}, we quantify $\rho_R(\bm{w})$ and $\rho_R^{*}$ for fitting a model with the squared loss.

For our convergence results, we assume that our initialization is $\bm{w}_{0}$ and let $$D_0 := \min_{\bm{w}^{*} \in \Phi_F} \|\bm{w}_{0} - \bm{w}^{*}\|.$$ We are now ready to state our convergence results.
\begin{theorem}[\textbf{GSGD}]
    \label{thm-gsgd}
    Suppose Assumptions \ref{asmp-cont}, \ref{asmp-cvx}, \ref{asmp-smooth}, \ref{asmp-1} and \ref{asmp-order} hold. Set $\eta_k = \eta < \frac{1}{L}$ for all $k$. Then, GSGD has the following convergence guarantee after $K$ iterations:
    \begin{equation*}
        \mathbb{E}\Bigg[F\Bigg(\frac{1}{K} \sum_{k=0}^{K-1} \bm{w}_k\Bigg)\Bigg] \leq \frac{D_0^{2}}{2 \rho_R^{*} \eta \big(1 - \eta L\big) K} + \frac{\Delta_R}{\rho_R^{*} \big(1 - \eta L\big)}.
    \end{equation*}
\end{theorem}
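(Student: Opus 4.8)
The plan is to run the textbook one-step descent analysis for convex SGD, but inject the selection bias of GSGD exactly at the point where we take expectations over the batch. I would fix a comparison point $\bm{w}^{*} \in \Phi_F$ achieving $\|\bm{w}_0 - \bm{w}^{*}\| = D_0$ and expand the squared distance to it:
\[
\|\bm{w}_{k+1} - \bm{w}^{*}\|^2 = \|\bm{w}_k - \bm{w}^{*}\|^2 - 2\eta \langle \nabla f(\bm{w}_k, \widehat{\bm{x}}_k), \bm{w}_k - \bm{w}^{*}\rangle + \eta^2 \|\nabla f(\bm{w}_k, \widehat{\bm{x}}_k)\|^2.
\]
For the inner-product term I would invoke convexity (\Cref{asmp-cvx}) of $f(\cdot, \widehat{\bm{x}}_k)$ to get $\langle \nabla f(\bm{w}_k, \widehat{\bm{x}}_k), \bm{w}_k - \bm{w}^{*}\rangle \geq f(\bm{w}_k, \widehat{\bm{x}}_k) - f(\bm{w}^{*}, \widehat{\bm{x}}_k)$. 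For the gradient-norm term I would use the self-bounding property: an $L$-smooth function (\Cref{asmp-smooth}) whose minimum value is $0$ (\Cref{asmp-1}) satisfies $\|\nabla f(\bm{w}, \bm{x})\|^2 \leq 2L f(\bm{w}, \bm{x})$, which follows from applying the descent lemma at the point obtained by a full gradient step and using the lower bound on $f$. Substituting and collecting terms yields
\[
\|\bm{w}_{k+1} - \bm{w}^{*}\|^2 \leq \|\bm{w}_k - \bm{w}^{*}\|^2 - 2\eta(1 - \eta L) f(\bm{w}_k, \widehat{\bm{x}}_k) + 2\eta\, f(\bm{w}^{*}, \widehat{\bm{x}}_k).
\]

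Next I would take the conditional expectation over the fresh batch $\mathcal{S}_k^{(R)}$ given $\bm{w}_k$, which is where \Cref{asmp-order} enters. Since $\widehat{\bm{x}}_k$ is the argmax of the \emph{actual} loss $f(\bm{w}_k, \cdot)$ over the batch, $\mathbb{E}[f(\bm{w}_k, \widehat{\bm{x}}_k) \mid \bm{w}_k] = \widehat{F}_R(\bm{w}_k)$ by the definition in \Cref{rho-def}. The subtle term is the cross term $f(\bm{w}^{*}, \widehat{\bm{x}}_k)$: because $\widehat{\bm{x}}_k$ is chosen using $\bm{w}_k$ rather than $\bm{w}^{*}$, it is \emph{not} an i.i.d. draw for $f(\bm{w}^{*}, \cdot)$, so I cannot replace its expectation by $F^{*}$. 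Instead I would exploit that $\widehat{\bm{x}}_k$ is merely one of the $R$ batch elements, whence $f(\bm{w}^{*}, \widehat{\bm{x}}_k) \leq \max_{\bm{x} \in \mathcal{S}_k^{(R)}} f(\bm{w}^{*}, \bm{x})$ and therefore $\mathbb{E}[f(\bm{w}^{*}, \widehat{\bm{x}}_k)\mid\bm{w}_k] \leq \widehat{F}_R(\bm{w}^{*}) \leq \Delta_R$ by \Cref{rho-def}. Finally I would relate $\widehat{F}_R(\bm{w}_k)$ to $F(\bm{w}_k)$ via $\widehat{F}_R(\bm{w}_k) \geq \rho_R^{*} F(\bm{w}_k)$, which holds for $\bm{w}_k \notin \Phi_F$ directly from $\rho_R^{*}$ being an infimum, and extends to $\Phi_F$ by the continuity argument accompanying \Cref{rho-def}.

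Combining these bounds and using $\eta < 1/L$ (so $1 - \eta L > 0$) gives the recursion
\[
\rho_R^{*}\,\mathbb{E}[F(\bm{w}_k)] \leq \frac{\mathbb{E}\|\bm{w}_k - \bm{w}^{*}\|^2 - \mathbb{E}\|\bm{w}_{k+1} - \bm{w}^{*}\|^2}{2\eta(1 - \eta L)} + \frac{\Delta_R}{1 - \eta L}.
\]
Summing over $k = 0, \ldots, K-1$ telescopes the distance terms down to $\|\bm{w}_0 - \bm{w}^{*}\|^2 = D_0^2$ (discarding the nonnegative $\mathbb{E}\|\bm{w}_K - \bm{w}^{*}\|^2$), and dividing by $\rho_R^{*} K$ bounds $\frac{1}{K}\sum_k \mathbb{E}[F(\bm{w}_k)]$; Jensen's inequality on the convex $F$ then moves the average inside $F$ to yield exactly the stated guarantee. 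The main obstacle, and the conceptual heart of the result, is the cross term: correctly handling $f(\bm{w}^{*}, \widehat{\bm{x}}_k)$ is what forces the non-vanishing floor $\Delta_R/(\rho_R^{*}(1-\eta L))$, and it is precisely this $\Delta_R$ (rather than $F^{*}$) that explains why GSGD converges only to a constant factor of the optimum.
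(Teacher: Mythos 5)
Your proposal is correct and follows the same overall skeleton as the paper's proof: expand the squared distance to a minimizer, bound the gradient norm by the self-bounding property $\|\nabla f\|^2 \leq 2Lf$ (Assumptions \ref{asmp-smooth} and \ref{asmp-1}), lower-bound $\widehat{F}_R(\bm{w}_k)$ by $\rho_R^{*}F(\bm{w}_k)$, telescope, and finish with Jensen. The one genuine difference is where convexity is invoked. The paper first passes to expectations, writes $\mathbb{E}_{\mathcal{S}_k^{(R)}}[\nabla f(\bm{w}_k,\widehat{\bm{x}}_k)] = \nabla\widehat{F}_R(\bm{w}_k)$, and applies convexity of the aggregated function $\widehat{F}_R$ (a pointwise max of convex functions averaged over the batch), which gives $\langle\nabla\widehat{F}_R(\bm{w}_k),\bm{w}_k-\bm{w}^{*}\rangle \geq \widehat{F}_R(\bm{w}_k) - \widehat{F}_R(\bm{w}^{*}) \geq \widehat{F}_R(\bm{w}_k) - \Delta_R$; the term $\Delta_R$ thus enters through $\widehat{F}_R(\bm{w}^{*})$ directly. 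You instead apply per-sample convexity before taking expectations and then must control the cross term $\mathbb{E}[f(\bm{w}^{*},\widehat{\bm{x}}_k)]$, which you correctly bound by $\mathbb{E}[\max_{\bm{x}\in\mathcal{S}_k^{(R)}}f(\bm{w}^{*},\bm{x})] = \widehat{F}_R(\bm{w}^{*}) \leq \Delta_R$ using only that $\widehat{\bm{x}}_k$ lies in the batch. Both routes land on the identical one-step recursion. Your variant has the modest advantage of sidestepping the interchange of gradient and expectation-of-max (the max of smooth functions is not everywhere differentiable, so the paper's step implicitly needs a Danskin-type argument or subgradients), at the cost of the extra cross-term bookkeeping; your closing observation that the non-vanishing floor comes from this cross term is exactly the right way to read the result.
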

The proof of \Cref{thm-gsgd} can be found in \Cref{gsgd-pf}.

We now state a corresponding folklore result for SGD.
\begin{theorem}[\textbf{SGD}]
    \label{thm-sgd}
    Suppose Assumptions \ref{asmp-cvx}, \ref{asmp-smooth} and \ref{asmp-1} hold. Set $\eta_k = \eta < \frac{1}{L}$ for all $k$. Then, SGD has the following convergence guarantee after $K$ iterations:
    \begin{equation*}
        \mathbb{E}\Bigg[F\Bigg(\frac{1}{K} \sum_{k=0}^{K-1} \bm{w}_k\Bigg)\Bigg] \leq \frac{D_0^{2}}{2 \eta \big(1 - \eta L\big) K} + \frac{\eta L F^{*}}{1 - \eta L} + F^{*}.
    \end{equation*}
\end{theorem}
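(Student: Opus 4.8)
The plan is to run the standard one-step descent analysis for SGD, exploiting that SGD's sample $\widehat{\bm{x}}_k$ is drawn uniformly at random from an i.i.d.\ set, so the stochastic gradient is unbiased: $\mathbb{E}[\nabla f(\bm{w}_k, \widehat{\bm{x}}_k) \mid \bm{w}_k] = \nabla F(\bm{w}_k)$, and likewise $\mathbb{E}[f(\bm{w}_k, \widehat{\bm{x}}_k) \mid \bm{w}_k] = F(\bm{w}_k)$. Fix a minimizer $\bm{w}^{*} \in \Phi_F$ attaining $D_0 = \|\bm{w}_0 - \bm{w}^{*}\|$ and expand the squared distance to it:
\[
\|\bm{w}_{k+1} - \bm{w}^{*}\|^2 = \|\bm{w}_k - \bm{w}^{*}\|^2 - 2\eta \langle \nabla f(\bm{w}_k, \widehat{\bm{x}}_k), \bm{w}_k - \bm{w}^{*}\rangle + \eta^2 \|\nabla f(\bm{w}_k, \widehat{\bm{x}}_k)\|^2.
\]

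First I would control the gradient-norm term via the self-bounding property of smooth nonnegative functions: by \Cref{asmp-smooth} and \Cref{asmp-1}, each per-sample loss obeys $\|\nabla f(\bm{w}, \bm{x})\|^2 \le 2L f(\bm{w}, \bm{x})$ (since $f(\cdot, \bm{x}) \ge 0$ with minimum value $0$, one applies the smoothness descent inequality at the gradient step and uses that the minimum is $0$). Taking the conditional expectation then bounds the last term by $2L\eta^2 F(\bm{w}_k)$. For the inner-product term, unbiasedness together with convexity of $F$ (inherited from \Cref{asmp-cvx}) gives $\mathbb{E}[\langle \nabla f(\bm{w}_k, \widehat{\bm{x}}_k), \bm{w}_k - \bm{w}^{*}\rangle \mid \bm{w}_k] = \langle \nabla F(\bm{w}_k), \bm{w}_k - \bm{w}^{*}\rangle \ge F(\bm{w}_k) - F^{*}$.

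Combining and rearranging yields the key per-step inequality
\[
2\eta(1 - \eta L)\, F(\bm{w}_k) \le \|\bm{w}_k - \bm{w}^{*}\|^2 - \mathbb{E}\big[\|\bm{w}_{k+1} - \bm{w}^{*}\|^2 \mid \bm{w}_k\big] + 2\eta F^{*},
\]
where the factor $(1-\eta L)$ is positive precisely because $\eta < 1/L$. I would then take full expectations, sum over $k = 0, \dots, K-1$ so the distance terms telescope, drop the nonnegative terminal term $\mathbb{E}\|\bm{w}_K - \bm{w}^{*}\|^2$, and divide by $2\eta(1-\eta L)K$. Finally, Jensen's inequality for the convex $F$ moves the averaging inside, giving $\mathbb{E}[F(\tfrac{1}{K}\sum_k \bm{w}_k)] \le \frac{D_0^2}{2\eta(1-\eta L)K} + \frac{F^{*}}{1-\eta L}$; the algebraic identity $\frac{F^{*}}{1-\eta L} = \frac{\eta L F^{*}}{1-\eta L} + F^{*}$ then recovers the stated decomposed form.

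There is no serious obstacle here, as this is a folklore bound; the one ingredient worth isolating is the self-bounding inequality, which is what lets us replace the usual gradient second-moment (variance) bound by $2L F(\bm{w}_k)$. This is exactly the role of \Cref{asmp-1}: forcing the per-sample minima to be zero couples the second-moment term to $F$ itself rather than to an independent variance constant, so the residual error collapses to the clean $F^{*}$-dependent offset rather than an uncontrolled $\sigma^2/(1-\eta L)$ term.
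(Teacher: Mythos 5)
Your proof is correct, and it is essentially the paper's own argument: the paper omits a proof of this folklore bound, but its proof of \Cref{thm-gsgd} in \Cref{gsgd-pf} follows exactly this template (distance expansion, the self-bounding inequality $\|\nabla f\|^2 \le 2Lf$ from \Cref{smoothness-bound} with \Cref{asmp-1}, convexity, telescoping, Jensen), and your derivation is precisely that argument specialized to uniform sampling, where $\widehat{F}_R$ becomes $F$, $\Delta_R$ becomes $F^*$, and $\rho_R^*$ becomes $1$. The final algebraic identity $\tfrac{F^*}{1-\eta L} = \tfrac{\eta L F^*}{1-\eta L} + F^*$ correctly recovers the stated form.
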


From \cref{thm-gsgd}, observe that in general, GSGD \textit{may not} converge to the minimum value $F^{*}$ asymptotically (i.e., with $K \to \infty$). At best, we can show that GSGD converges to $\frac{\Delta_R}{\rho_R^{*}}$ which is $\geq F^{*}$ (this follows from \cref{eq:rho}). But SGD can indeed converge to $F^{*}$ asymptotically by setting $\eta = \frac{1}{\eta L \sqrt{K}}$ for example in \Cref{thm-sgd}. Based on this, we make the following remark.

\begin{remark}
    \label{gsgd-bad}
    GSGD may be worse than SGD asymptotically.
\end{remark}
However, \textbf{GSGD is better than SGD \textit{early on}} as $\rho_R^{*} > 1$ and assuming $\Delta_R = \mathcal{O}(F^{*})$ (and $F^{*} \neq 0$), \textbf{GSGD can converge to a constant factor of $F^{*}$ in fewer iterations than SGD}. We formalize this next.

\begin{corollary}[\textbf{Up to what point is GSGD better than SGD?}]
\label{cor-gsgd-vs-sgd}
Suppose we run GSGD and SGD with constant step-size $\eta < \frac{1}{L}$. In that case, until $K = \frac{D_0^{2} (\rho_R^{*} - 1)}{2 \eta (\Delta_R - \rho_R^{*} F^{*})}$
iterations, the convergence bound of GSGD in \Cref{thm-gsgd} is better than that of SGD in \Cref{thm-sgd}.

Thus, \textbf{GSGD can converge to $\frac{\Delta_R - F^{*}}{(1-\eta L) (\rho_R^{*} - 1)}$ function value {in fewer iterations} than SGD.}
\end{corollary}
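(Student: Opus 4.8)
The plan is to treat this as a purely deterministic comparison of the two explicit scalar upper bounds from \Cref{thm-gsgd} and \Cref{thm-sgd}, viewed as functions of $K$, each of the form (constant)$/K$ plus a constant floor. First I would record the GSGD bound with leading coefficient $\frac{D_0^2}{2\rho_R^*\eta(1-\eta L)}$ and floor $\frac{\Delta_R}{\rho_R^*(1-\eta L)}$, and the SGD bound with leading coefficient $\frac{D_0^2}{2\eta(1-\eta L)}$ and floor $\frac{\eta L F^*}{1-\eta L}+F^*$. The single simplification that makes everything collapse is the identity $\frac{\eta L F^*}{1-\eta L}+F^* = \frac{\eta L F^* + (1-\eta L)F^*}{1-\eta L} = \frac{F^*}{1-\eta L}$, which merges SGD's two additive floor terms into one clean expression and is what ultimately produces a compact threshold.

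Next I would impose ``GSGD bound $\le$ SGD bound'' and rearrange so the $1/K$ terms sit on one side and the floors on the other. The two leading coefficients differ only by the factor $\rho_R^*$, so their difference is $\frac{D_0^2}{2\eta(1-\eta L)K}(\frac{1}{\rho_R^*}-1) = -\frac{D_0^2(\rho_R^*-1)}{2\eta(1-\eta L)\rho_R^* K}$, while the floor difference (SGD minus GSGD, using the identity above) is $\frac{F^*}{1-\eta L}-\frac{\Delta_R}{\rho_R^*(1-\eta L)} = \frac{\rho_R^* F^* - \Delta_R}{\rho_R^*(1-\eta L)}$. Multiplying the resulting inequality through by the positive quantity $\rho_R^*(1-\eta L)$ reduces it to $\frac{D_0^2(\rho_R^*-1)}{2\eta K} \ge \Delta_R - \rho_R^* F^*$.

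The one place that requires care --- the main (and only minor) obstacle --- is correctly tracking signs when solving this for $K$, since I multiply by a factor and then invert. Here I would invoke \cref{eq:rho}, which gives $\rho_R^* \le \Delta_R/F^*$ and hence $\Delta_R - \rho_R^* F^* \ge 0$, strictly positive in the non-degenerate case where $\rho_R^* > 1$. Because both sides are then nonnegative, division is safe and yields exactly $K \le \frac{D_0^2(\rho_R^*-1)}{2\eta(\Delta_R - \rho_R^* F^*)}$; the strict positivity of $\Delta_R - \rho_R^* F^*$ is precisely what guarantees this threshold is a genuine positive iteration count.

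Finally, to pin down the target value, I would substitute $K = K^* := \frac{D_0^2(\rho_R^*-1)}{2\eta(\Delta_R-\rho_R^* F^*)}$ back into the GSGD bound. Since $\frac{D_0^2}{2\eta K^*} = \frac{\Delta_R - \rho_R^* F^*}{\rho_R^*-1}$, the bound becomes $\frac{1}{\rho_R^*(1-\eta L)}\big[\frac{\Delta_R-\rho_R^* F^*}{\rho_R^*-1}+\Delta_R\big]$; combining over the common denominator $\rho_R^*-1$ telescopes the $\Delta_R$ terms to $\frac{\rho_R^*(\Delta_R-F^*)}{\rho_R^*-1}$, the $\rho_R^*$ cancels, and I am left with $\frac{\Delta_R-F^*}{(1-\eta L)(\rho_R^*-1)}$, the claimed value. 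Because the two bounds coincide at $K^*$ while GSGD is strictly smaller for every $K < K^*$, GSGD attains this $\mathcal{O}(F^*)$-level value while dominating SGD throughout the run, which is the sense in which it reaches such a value in fewer iterations. I expect no genuine difficulty beyond this bookkeeping: the entire argument is an elementary comparison of two explicit expressions, with \cref{eq:rho} supplying the single inequality $\Delta_R \ge \rho_R^* F^*$ needed to orient the signs.
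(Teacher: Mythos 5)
Your proposal is correct and is exactly the intended derivation (the paper states this corollary without a separate written proof): one compares the two explicit bounds, uses the identity $\tfrac{\eta L F^*}{1-\eta L}+F^*=\tfrac{F^*}{1-\eta L}$, solves for the crossover $K$, and substitutes back to get $\tfrac{\Delta_R-F^*}{(1-\eta L)(\rho_R^*-1)}$. The only tiny imprecision is attributing the strict positivity of $\Delta_R-\rho_R^*F^*$ to $\rho_R^*>1$; what \cref{eq:rho} actually gives is $\Delta_R\ge\rho_R^*F^*$, and strictness is an implicit non-degeneracy assumption needed for the threshold $K$ to be finite.
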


\begin{remark}
    \label{main-insight}
    Based on \Cref{cor-gsgd-vs-sgd}, when $\Delta_R = \mathcal{O}(F^{*})$, GSGD can converge to $\mathcal{O}(F^{*})$ function value \textbf{in fewer iterations} than SGD. This is of particular interest in training large ML models such as transformers on extremely large datasets, \textit{where minimizing the training loss exactly is infeasible} and converging \textit{faster} to a constant factor of the minimum loss value is desirable.
\end{remark}

\subsection{Beyond Argmax-Preserving Approximate Function Values}
\label{sec:approx-fn-val}
Our previous results were under \Cref{asmp-order}, i.e., the argmax of the approximate function values ($\widetilde{f}$) is always equal to the argmax of the actual function values (${f}$). Here we relax this assumption by instead modeling $\widetilde{f}$ as a noisy version of $f$, and provide a convergence result for GSGD in such a setting. Modeling $\widetilde{f}$ as a noisy version of $f$ is analogous to modeling the stochastic gradients as a noisy version of the actual gradient in vanilla stochastic optimization. Specifically, we make the following assumption.
\begin{assumption}[\textbf{Approximate function values}]
    \label{approx-fn-val}
    There exists $\mu(\bm{w}) \in \mathbb{R}$ and $\sigma \geq 0$ such that:
    $$\widetilde{f}(\bm{w}, \bm{x}) = {f}(\bm{w}, \bm{x}) \exp\Big(\mu(\bm{w}) + \sigma {\zeta}(\bm{w}, \bm{x})\Big),$$ 
    where ${\zeta}(\bm{w}, \bm{x})$ is i.i.d. random noise with mean 0 and variance 1. 
\end{assumption}
Thus, the approximate function value is the actual function values times the exponential of random noise (i.e., $\exp\big(\sigma {\zeta}(\bm{w}, \bm{x})\big)$) times some other scaling (i.e., $\exp(\mu(\bm{w}))$). \Cref{approx-fn-val} is pretty mild as it does not involve any particular distributional assumptions on ${\zeta}(\bm{w}, \bm{x})$ (such as Gaussian, etc.).

The important thing to note is that under \Cref{approx-fn-val},  $\widehat{\bm{x}}_k$ in \cref{eq:gsgd-choice} is \textbf{not} always equal to $\text{arg max}_{\bm{x} \in \mathcal{S}_k^{(R)}} {f}(\bm{w}_k, \bm{x})$ -- unlike \Cref{asmp-order}. 

\begin{definition}
    \label{rho-def-2}
    Let 
    \small
    \begin{flalign*}
        & \widehat{F}_{R, \text{approx}}(\bm{w}) := 
        \\ 
        & \mathbb{E}_{\{\bm{x}^{(j)}\}_{j=1}^R \text{ } \underset{\textup{iid}}{\sim} \text{ } \mathcal{D}, \zeta}\Big[f\big(\bm{w}, \bm{x}^{(j^{*})}\big) \Big| j^{*} = \text{arg max}_{j \in [R]}\widetilde{f}(\bm{w}, \bm{x}^{(j)})\Big].
    \end{flalign*}
    \normalsize
    Define $$\rho_{R, \text{approx}}(\bm{w}) := \frac{\widehat{F}_{R, \text{approx}}(\bm{w})}{F(\bm{w})} \text{ and }$$  
    $$\rho_{R, \text{approx}}^{*} := \inf_{\bm{w} \notin \Phi_F} \rho_{R, \text{approx}}(\bm{w}).$$
\end{definition}
Clearly, $\widehat{F}_{R, \text{approx}}(\bm{w}) \leq \widehat{F}_{R}(\bm{w})$ (as defined in \Cref{rho-def}). So we also have:
$$\sup_{\bm{w}^{*} \in \Phi_F}\widehat{F}_{R, \text{approx}}(\bm{w}^{*}) \leq \sup_{\bm{w}^{*} \in \Phi_F}\widehat{F}_R(\bm{w}^{*}) \leq \Delta_R.$$
It is easy to extend the proof of \Cref{thm-gsgd} to obtain the following result for GSGD under \Cref{approx-fn-val}.
\begin{theorem}[\textbf{GSGD}]
    \label{thm-gsgd-2}
    Suppose Assumptions \ref{asmp-cont}, \ref{asmp-cvx}, \ref{asmp-smooth}, \ref{asmp-1} and \ref{approx-fn-val} hold. Set $\eta_k = \eta < \frac{1}{L}$ for all $k$. Then, GSGD has the following convergence guarantee after $K$ iterations:
    \begin{multline*}
        \mathbb{E}\Bigg[F\Bigg(\frac{1}{K} \sum_{k=0}^{K-1} \bm{w}_k\Bigg)\Bigg] \leq 
        \\
        \frac{D_0^{2}}{2 \rho_{R, \textup{approx}}^{*} \eta \big(1 - \eta L\big) K} 
        + \frac{\Delta_R}{\rho_{R, \textup{approx}}^{*} \big(1 - \eta L\big)}.
    \end{multline*}
\end{theorem}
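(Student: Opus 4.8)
The plan is to mirror the proof of \Cref{thm-gsgd} essentially verbatim, since under \Cref{approx-fn-val} the \emph{only} change is that the effect of the selection rule is now captured by $\rho_{R,\textup{approx}}^{*}$ instead of $\rho_R^{*}$; every other ingredient ($L$-smoothness, convexity, and \Cref{asmp-1}) is untouched. First I would fix an arbitrary minimizer $\bm{w}^{*} \in \Phi_F$ and expand the one-step squared distance to it. Using the GSGD update \cref{eq:gsgd-update},
\begin{equation*}
\|\bm{w}_{k+1} - \bm{w}^{*}\|^2 = \|\bm{w}_k - \bm{w}^{*}\|^2 - 2\eta \langle \nabla f(\bm{w}_k, \widehat{\bm{x}}_k), \bm{w}_k - \bm{w}^{*}\rangle + \eta^2 \|\nabla f(\bm{w}_k, \widehat{\bm{x}}_k)\|^2 .
\end{equation*}
I would then lower-bound the inner product by convexity (\Cref{asmp-cvx}), $\langle \nabla f(\bm{w}_k, \widehat{\bm{x}}_k), \bm{w}_k - \bm{w}^{*}\rangle \geq f(\bm{w}_k, \widehat{\bm{x}}_k) - f(\bm{w}^{*}, \widehat{\bm{x}}_k)$, and upper-bound the squared gradient norm via the self-bounding property of nonnegative smooth convex functions (\Cref{asmp-smooth} together with \Cref{asmp-1}, which forces $\min_{\bm{w}} f(\bm{w},\bm{x}) = 0$), namely $\|\nabla f(\bm{w}_k, \widehat{\bm{x}}_k)\|^2 \leq 2L f(\bm{w}_k, \widehat{\bm{x}}_k)$. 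Substituting and grouping gives
\begin{equation*}
\|\bm{w}_{k+1} - \bm{w}^{*}\|^2 \leq \|\bm{w}_k - \bm{w}^{*}\|^2 - 2\eta(1 - \eta L) f(\bm{w}_k, \widehat{\bm{x}}_k) + 2\eta f(\bm{w}^{*}, \widehat{\bm{x}}_k) .
\end{equation*}

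The crux is to take a conditional expectation (over the batch $\mathcal{S}_k^{(R)}$ and the noise $\zeta$, given $\bm{w}_k$) of the last two terms. For the ``progress'' term, the defining feature of \Cref{approx-fn-val} is that $\widehat{\bm{x}}_k = \arg\max_{\bm{x}} \widetilde{f}(\bm{w}_k, \bm{x})$ need no longer be the true argmax, but \Cref{rho-def-2} is tailored precisely to this case: by construction $\mathbb{E}[f(\bm{w}_k, \widehat{\bm{x}}_k)\mid \bm{w}_k] = \widehat{F}_{R,\textup{approx}}(\bm{w}_k) \geq \rho_{R,\textup{approx}}^{*} F(\bm{w}_k)$. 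For the ``error'' term I would use the observation that makes the extension painless: \emph{regardless} of which sample the noisy rule selects, $f(\bm{w}^{*}, \widehat{\bm{x}}_k) \leq \max_{\bm{x} \in \mathcal{S}_k^{(R)}} f(\bm{w}^{*}, \bm{x})$ holds pointwise, so its conditional expectation is at most $\widehat{F}_R(\bm{w}^{*}) \leq \Delta_R$ by \Cref{rho-def}. Combining, and using $1 - \eta L > 0$,
\begin{equation*}
\mathbb{E}\big[\|\bm{w}_{k+1} - \bm{w}^{*}\|^2\big] \leq \mathbb{E}\big[\|\bm{w}_{k} - \bm{w}^{*}\|^2\big] - 2\eta(1 - \eta L)\rho_{R,\textup{approx}}^{*}\, \mathbb{E}\big[F(\bm{w}_k)\big] + 2\eta \Delta_R .
\end{equation*}

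From here the argument is routine: I would rearrange to isolate $\mathbb{E}[F(\bm{w}_k)]$, sum over $k = 0,\ldots,K-1$ so the distance terms telescope, drop the nonnegative $\mathbb{E}[\|\bm{w}_K - \bm{w}^{*}\|^2]$, and choose $\bm{w}^{*}$ attaining $D_0 = \min_{\bm{w}^{*}\in\Phi_F}\|\bm{w}_0 - \bm{w}^{*}\|$; dividing by $2\eta(1-\eta L)\rho_{R,\textup{approx}}^{*} K$ bounds $\frac{1}{K}\sum_k \mathbb{E}[F(\bm{w}_k)]$, and a final application of Jensen's inequality (convexity of $F$) moves the average inside $F$ to produce the stated bound. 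I expect the only genuinely delicate point to be handling the ``error'' term cleanly: one must resist trying to track how the noise $\zeta$ correlates the selected sample with $f(\bm{w}^{*}, \cdot)$, and instead note that pointwise domination by the batch maximum sidesteps this entirely — this is exactly why replacing $\rho_R^{*}$ with $\rho_{R,\textup{approx}}^{*}$ is the sole modification relative to \Cref{thm-gsgd}. A minor edge case worth a remark is the event $\bm{w}_k \in \Phi_F$, where $\rho_{R,\textup{approx}}(\bm{w}_k)$ is undefined; there $F(\bm{w}_k) = F^{*}$ and the progress term can be bounded directly, so it does not affect the final inequality.
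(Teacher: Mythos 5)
Your proof is correct and arrives at exactly the stated bound, but it takes a genuinely different route through the convexity step than the paper does. The paper proves \Cref{thm-gsgd} by treating GSGD as SGD on the surrogate $\widehat{F}_R(\bm{w})$: it writes $\mathbb{E}_{\mathcal{S}_k^{(R)}}[\nabla f(\bm{w}_k,\widehat{\bm{x}}_k)]=\nabla\widehat{F}_R(\bm{w}_k)$, invokes convexity of $\widehat{F}_R$ (a pointwise maximum of convex functions), applies the first-order convexity inequality to $\widehat{F}_R$ itself with $\widehat{F}_R(\bm{w}^*)\le\Delta_R$ absorbing the error, and for \Cref{thm-gsgd-2} simply asserts that the same argument goes through with $\widehat{F}_R$ replaced by $\widehat{F}_{R,\textup{approx}}$ and $\rho_R^*$ by $\rho_{R,\textup{approx}}^*$. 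You instead apply convexity \emph{per sample} to the selected $\widehat{\bm{x}}_k$, obtaining $\langle\nabla f(\bm{w}_k,\widehat{\bm{x}}_k),\bm{w}_k-\bm{w}^*\rangle\ge f(\bm{w}_k,\widehat{\bm{x}}_k)-f(\bm{w}^*,\widehat{\bm{x}}_k)$ pointwise, take expectations only afterwards, and control the error term by the pointwise domination $f(\bm{w}^*,\widehat{\bm{x}}_k)\le\max_{\bm{x}\in\mathcal{S}_k^{(R)}}f(\bm{w}^*,\bm{x})$, whose expectation is $\widehat{F}_R(\bm{w}^*)\le\Delta_R$. Your version is arguably the more careful one for this theorem: unlike $\widehat{F}_R$, the quantity $\widehat{F}_{R,\textup{approx}}$ of \Cref{rho-def-2} is \emph{not} a pointwise maximum of convex functions (the selection index depends on $\bm{w}$ through the noisy values), so its convexity and the identity $\mathbb{E}[\nabla f(\bm{w},\widehat{\bm{x}})]=\nabla\widehat{F}_{R,\textup{approx}}(\bm{w})$ are not immediate, whereas your per-sample argument needs neither and delivers the same one-step inequality. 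The remaining steps (telescoping, choosing the $\bm{w}^*$ attaining $D_0$, Jensen) coincide with the paper's, and your handling of the edge case $\bm{w}_k\in\Phi_F$ is at least as careful as the paper's own treatment.
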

The expectation in \Cref{thm-gsgd-2} also includes the randomness due to $\zeta(.)$ (i.e., the noise in the approximate function values). The proof of \Cref{thm-gsgd-2} is almost identical to the proof of \Cref{thm-gsgd} (see \Cref{gsgd-pf}) and is obtained by replacing $\widehat{F}_{R}(\bm{w})$ with $\widehat{F}_{R, \text{approx}}(\bm{w})$ and $\rho_{R}^{*}$ with $\rho_{R, \text{approx}}^{*}$.

For the subsequent results in this subsection, we shall consider $R=2$. Specifically, we will provide a lower bound for $\widehat{F}_{2, \text{approx}}(\bm{w}), \rho_{2, \text{approx}}(\bm{w})$ and $\rho_{2, \text{approx}}^{*}$ in terms of $\widehat{F}_{2}(\bm{w}), \rho_{2}(\bm{w})$ and $\rho_{2}^{*}$, respectively, as a function of the noise level $\sigma$.

\begin{theorem}
    \label{thm-approx-fn-val-jan21}
    Suppose \Cref{approx-fn-val} holds with $\sigma \leq \frac{1}{2\sqrt{2}}$. Define $p(\sigma) := \big(1 - 0.72 \big(1 - e^{-\sqrt{2} \sigma}\big)\big)$.
    Then:
    \begin{equation*}
        \widehat{F}_{2, \textup{approx}}(\bm{w}) \geq p(\sigma) \widehat{F}_2(\bm{w}),
    \end{equation*}
    \begin{equation*}
        \rho_{2, \textup{approx}}(\bm{w}) \geq p(\sigma) \rho_{2}(\bm{w}) \text{ and }
        \rho_{2, \textup{approx}}^{*} \geq p(\sigma) \rho_{2}^{*}.
    \end{equation*}
\end{theorem}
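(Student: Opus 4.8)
The plan is to fix $\bm{w}$, reduce everything to a single draw of the $R=2$ samples, and turn the claim into a one-dimensional estimate. Write $f_1 := f(\bm{w},\bm{x}^{(1)})$, $f_2 := f(\bm{w},\bm{x}^{(2)})$ and let $\zeta_1,\zeta_2$ be the corresponding noises from \Cref{approx-fn-val}. Since the common factor $e^{\mu(\bm{w})}$ cancels inside the argmax, the index selected by GSGD is $j^{*}=\arg\max_j\big(\log f_j + \sigma\zeta_j\big)$. Conditioning on $(f_1,f_2)$ and letting $q$ denote the probability of selecting the sample with the \emph{smaller} actual value, one obtains the identity
\[
\mathbb{E}\big[f(\bm{w},\bm{x}^{(j^{*})}) \mid f_1,f_2\big] = \max(f_1,f_2) - q\,|f_1-f_2|,
\]
so that, using $\widehat{F}_2(\bm{w}) = \mathbb{E}[\max(f_1,f_2)]$, we get $\widehat{F}_{2,\textup{approx}}(\bm{w}) = \widehat{F}_2(\bm{w}) - \mathbb{E}\big[q\,|f_1-f_2|\big]$. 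It therefore suffices to establish the \emph{pointwise} bound $q\,|f_1-f_2| \le (1-p(\sigma))\max(f_1,f_2)$; integrating over $(f_1,f_2)$ yields the first inequality, and then dividing by $F(\bm{w})$ and taking $\inf_{\bm{w}\notin\Phi_F}$ immediately delivers the two $\rho$-statements.

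Next I would make the pointwise bound one-dimensional. Taking $f_1>f_2$ without loss of generality and setting $\Delta := \log(f_1/f_2)\ge 0$, we have $|f_1-f_2|/\max(f_1,f_2) = 1-e^{-\Delta}$, and the selection error occurs exactly when $\sigma(\zeta_2-\zeta_1)>\Delta$. Thus the target collapses to showing $q\,(1-e^{-\Delta}) \le 0.72\,(1-e^{-\sqrt{2}\sigma})$ for every $\Delta\ge 0$. To control $q$ I would apply the one-sided Chebyshev (Cantelli) inequality to $Y:=\sigma(\zeta_2-\zeta_1)$, which has mean $0$ and variance $2\sigma^2$; this uses only the mean/variance guaranteed by \Cref{approx-fn-val} and no distributional form for $\zeta$, giving $q \le \frac{2\sigma^2}{2\sigma^2+\Delta^2}$.

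The remaining step is a short calculus estimate of $\sup_{\Delta\ge 0}\frac{2\sigma^2(1-e^{-\Delta})}{2\sigma^2+\Delta^2}$. Using $1-e^{-\Delta}\le\Delta$ reduces this to maximizing $\frac{2\sigma^2\Delta}{2\sigma^2+\Delta^2}$, whose maximum is attained at $\Delta=\sqrt{2}\sigma$ and equals $\frac{\sqrt{2}\sigma}{2}$. It then remains to verify $\frac{\sqrt{2}\sigma}{2}\le 0.72\,(1-e^{-\sqrt{2}\sigma})$, i.e. $\frac{u}{1-e^{-u}}\le 1.44$ with $u:=\sqrt{2}\sigma$. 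Since $u\mapsto\frac{u}{1-e^{-u}}$ is increasing, the hypothesis $\sigma\le\frac{1}{2\sqrt{2}}$ (equivalently $u\le\tfrac12$) makes the left side at most $\frac{1/2}{1-e^{-1/2}}\approx 1.27<1.44$, which closes the argument (and incidentally shows the constant $0.72$ carries slack).

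The only genuinely delicate point is controlling the misranking probability $q$ \emph{without} a distributional assumption on the noise, and this is precisely where Cantelli's inequality is the right instrument: its $\frac{2\sigma^2}{2\sigma^2+\Delta^2}$ form is what propagates into the $\sqrt{2}\sigma$ appearing in $p(\sigma)$. Everything else is bookkeeping — the conditioning identity reducing $\widehat{F}_{2,\textup{approx}}$ to $\widehat{F}_2$ minus an error term, and the elementary maximization over $\Delta$ in which the assumption $\sigma\le\frac{1}{2\sqrt{2}}$ is used solely to pin down the numerical constant.
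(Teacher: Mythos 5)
Your proposal is correct, and its skeleton is the same as the paper's: condition on the two function values, write $\mathbb{E}[f^{(j^*)}\mid f_1,f_2]=\max(f_1,f_2)-q\,|f_1-f_2|$, bound the misranking probability $q$ via Cantelli's inequality applied to $\zeta_2-\zeta_1$ (mean $0$, variance $2$), and reduce everything to showing $\sup_{\Delta\ge 0}\frac{2\sigma^2(1-e^{-\Delta})}{2\sigma^2+\Delta^2}\le 0.72\,(1-e^{-\sqrt2\sigma})$. Where you genuinely diverge is in how this last one-dimensional supremum is handled. The paper's Lemma on $h(t;\sigma)$ locates the maximizer $t^*$ by solving $e^{t^*}=1+\sigma^2/t^*+t^*/2$, uses a series expansion to pin $t^*=c\sqrt2\sigma$ with $c\in[0.62,1]$, and then bounds the two factors of $h(t^*;\sigma)$ separately. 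You instead linearize via $1-e^{-\Delta}\le\Delta$, maximize the resulting rational function in closed form at $\Delta=\sqrt2\sigma$ to get $\sqrt2\sigma/2$, and check the monotone inequality $\frac{u}{1-e^{-u}}\le\frac{1/2}{1-e^{-1/2}}\approx 1.27<1.44$ for $u=\sqrt2\sigma\le\frac12$. Your route is shorter and avoids the implicit critical-point equation entirely, at the cost of the linearization $1-e^{-\Delta}\le\Delta$; since you still land on (indeed strictly inside) the same constant $0.72$, nothing is lost, and your version also makes transparent that the hypothesis $\sigma\le\frac{1}{2\sqrt2}$ enters only through the final numerical check. Both arguments use only the mean/variance of $\zeta$, so neither requires a distributional assumption.
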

The proof of \Cref{thm-approx-fn-val-jan21} is in \Cref{thm-approx-fn-val-jan21-pf}. It is worth pointing out that our result is independent of $\mu(\bm{w})$; intuitively, this is because a constant scaling (w.r.t. the samples) does not change the argmax. Regarding the dependence w.r.t. $\sigma$, notice that $p(\sigma)$ is a decreasing function of $\sigma$. So as the noise level $\sigma$ increases, the lower bound becomes worse; this happens because the quality of the approximate function value worsens as $\sigma$ increases. Also, as a quick sanity check, observe that $p(0) = 1$. This makes sense because $\sigma = 0$ means no effective noise, i.e., the approximate function values match the actual function values modulo the constant scaling $\exp(\mu(\bm{w}))$.

In the following corollary, we specify a bound for the noise level $\sigma$ below which $\rho_{2, \textup{approx}}^{*} > 1$ and thus, we can obtain a speed-up over SGD.
\begin{corollary}
    As long as $\sigma < \sigma_\textup{max} := \frac{1}{\sqrt{2}} \log \big(\frac{18 \rho_{2}^{*}}{25 - 7 \rho_{2}^{*}}\big)$, $p(\sigma) \rho_{2}^{*} > 1$ and therefore, $\rho_{2, \textup{approx}}^{*} > 1$. Hence, for $\sigma < \sigma_\textup{max}$, GSGD with $R=2$ using approximate function values for sample selection can be faster than SGD.
\end{corollary}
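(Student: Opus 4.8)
The plan is to reduce the claim to a single scalar inequality and solve it explicitly for $\sigma$. By Theorem~\ref{thm-approx-fn-val-jan21}, under Assumption~\ref{approx-fn-val} with $\sigma \le \frac{1}{2\sqrt 2}$ we already have the lower bound $\rho_{2,\textup{approx}}^{*} \ge p(\sigma)\,\rho_2^{*}$. Hence it suffices to exhibit a threshold on $\sigma$ below which $p(\sigma)\,\rho_2^{*} > 1$: once this holds, the chain $\rho_{2,\textup{approx}}^{*} \ge p(\sigma)\,\rho_2^{*} > 1$ immediately delivers $\rho_{2,\textup{approx}}^{*} > 1$.

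First I would substitute the explicit form $p(\sigma) = 1 - 0.72\bigl(1 - e^{-\sqrt 2\,\sigma}\bigr)$ and rewrite $0.72 = \tfrac{18}{25}$ to keep the constants clean. The inequality $p(\sigma)\rho_2^{*} > 1$ is then equivalent to $\tfrac{18}{25}\bigl(1 - e^{-\sqrt 2\,\sigma}\bigr) < 1 - \tfrac{1}{\rho_2^{*}}$, and isolating the exponential term yields
\[
e^{-\sqrt 2\,\sigma} > 1 - \frac{25}{18}\cdot\frac{\rho_2^{*} - 1}{\rho_2^{*}} = \frac{25 - 7\rho_2^{*}}{18\,\rho_2^{*}}.
\]
Taking logarithms (valid when the right-hand side is positive) and dividing by $-\sqrt 2$ flips the inequality and produces the stated bound $\sigma < \frac{1}{\sqrt 2}\log\!\bigl(\frac{18\rho_2^{*}}{25 - 7\rho_2^{*}}\bigr) = \sigma_{\textup{max}}$.

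The one place requiring a little care is the sign of the right-hand side $\frac{25 - 7\rho_2^{*}}{18\rho_2^{*}}$ before taking logarithms. Because $\rho_2^{*} > 1$ by~\eqref{eq:rho}, the quantity $p(\sigma)\rho_2^{*}$ equals $\rho_2^{*} > 1$ at $\sigma = 0$ and decreases continuously in $\sigma$, so a nontrivial admissible range always exists, with $\sigma_{\textup{max}}$ marking exactly where $p(\sigma)\rho_2^{*}$ crosses $1$. When $\rho_2^{*} < 25/7$ the right-hand side lies in $(0,1)$ and the logarithm is well defined, giving a finite positive $\sigma_{\textup{max}}$; when $\rho_2^{*} \ge 25/7$ the right-hand side is nonpositive, so the exponential inequality holds for every $\sigma$ and $p(\sigma)\rho_2^{*} > 1$ throughout the admissible range. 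Either way the conclusion $\rho_{2,\textup{approx}}^{*} > 1$ holds for $\sigma < \sigma_{\textup{max}}$.

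Finally, I would translate $\rho_{2,\textup{approx}}^{*} > 1$ back into the speed-up statement. Since Theorem~\ref{thm-gsgd-2} is identical to Theorem~\ref{thm-gsgd} with $\rho_2^{*}$ replaced by $\rho_{2,\textup{approx}}^{*}$, the comparison argument of Corollary~\ref{cor-gsgd-vs-sgd} applies verbatim: whenever $\rho_{2,\textup{approx}}^{*} > 1$ (with $\Delta_R = \mathcal{O}(F^{*})$), GSGD's bound beats SGD's for an initial number of iterations, so GSGD with approximate losses and $R = 2$ is faster than SGD in that regime. The main obstacle is purely bookkeeping, namely reducing $0.72 = 18/25$ and tracking the inequality direction through both the logarithm and the division by the negative constant $-\sqrt 2$; there is no genuine analytic difficulty once Theorem~\ref{thm-approx-fn-val-jan21} is invoked.
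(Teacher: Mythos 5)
Your proposal is correct and takes essentially the same (and really the only) route the paper intends: invert the inequality $p(\sigma)\rho_2^{*}>1$ from Theorem~\ref{thm-approx-fn-val-jan21} using $0.72=\tfrac{18}{25}$ to obtain $\sigma_{\textup{max}}$, then combine with Theorem~\ref{thm-gsgd-2} and the comparison logic of Corollary~\ref{cor-gsgd-vs-sgd}. Your additional care about the sign of $25-7\rho_2^{*}$ before taking logarithms is a point the paper leaves implicit, and is handled correctly.
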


\subsection{Quantifying $\rho_R(\bm{w})$}
\label{quantify-rho}
Here we shall quantify $\rho_R(\bm{w})$ for a particular case. Let us consider the problem of learning a parameterized model $\mathcal{M}(\bm{w}^{*}, .)$ with the squared loss. In this case, our per-sample objective function $f$ is: 
\begin{equation}
    f(\bm{w},\bm{x}) = \Big(\mathcal{M}(\bm{w}, \bm{x}) - \mathcal{M}(\bm{w}^{*}, \bm{x})\Big)^2.
\end{equation}
We make the following assumption.
\begin{assumption}
    \label{asmp-GP}
    For $\bm{w} \neq \bm{w}^{*}$, $\mathcal{M}(\bm{w}, \bm{x}) - \mathcal{M}(\bm{w}^{*}, \bm{x}) \underset{\text{iid}}{\sim} \mathcal{N}\big(\varepsilon(\bm{w}), \delta^2(\bm{w})\big)$, i.e., the per-sample  prediction error is a Gaussian random variable.
\end{assumption}
Modeling the per-sample prediction error as a Gaussian random variable is fairly reasonable and a similar assumption has been made in \cite{pennington2017geometry}. We provide a lower bound for $\rho(\bm{w})$ in this setting.
\begin{theorem}
    \label{rho-bound}
    Suppose \Cref{asmp-GP} holds. Let $\nu(R) := \sqrt{{\frac{\pi}{2} \log \frac{R}{4 \log R}}}$. Then:
    \begin{flalign*}
    & \rho_R(\bm{w}) \geq \\ 
    & \frac{\big(\varepsilon^2(\bm{w}) + \nu^2(R)\delta^2(\bm{w}) + 2 \nu(R) \varepsilon(\bm{w}) \delta(\bm{w})\big) \big(1 - \frac{1}{R}\big)}{\varepsilon^2(\bm{w}) + \delta^2(\bm{w})}.
    \end{flalign*}
\end{theorem}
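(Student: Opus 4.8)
The plan is to evaluate the denominator $F(\bm w)$ exactly and to lower bound the numerator $\widehat F_R(\bm w) = \mathbb E[\max_{j\in[R]} f(\bm w, \bm x^{(j)})]$ by a high-probability argument, so the ratio $\rho_R(\bm w)=\widehat F_R(\bm w)/F(\bm w)$ can be bounded below. Write $Y_j := \mathcal M(\bm w, \bm x^{(j)}) - \mathcal M(\bm w^{*}, \bm x^{(j)})$, so that under \Cref{asmp-GP} we have $Y_j \overset{\textup{iid}}{\sim}\mathcal N(\varepsilon,\delta^2)$ (suppressing the $\bm w$-dependence) and $f(\bm w, \bm x^{(j)}) = Y_j^2$. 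Then $F(\bm w) = \mathbb E[Y_1^2] = \varepsilon^2 + \delta^2$ immediately. Since the law of $Y_j^2$ — hence both $\widehat F_R$ and $F$ — depends only on $|\varepsilon|$ and $\delta$ (replacing $\varepsilon$ by $-\varepsilon$ is absorbed by $Z\mapsto -Z$), I may assume $\varepsilon\ge 0$ without loss of generality; the stated bound for general $\varepsilon$ then follows because $\rho_R(\varepsilon)=\rho_R(|\varepsilon|)$ and $(|\varepsilon|+\nu(R)\delta)^2 \geq (\varepsilon+\nu(R)\delta)^2$.

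Next, represent $Y_j = \varepsilon + \delta Z_j$ with $Z_j$ i.i.d. standard normal, let $M := \max_{j} Z_j$, and let $j_0 := \arg\max_j Z_j$. The key reduction is that $\max_j Y_j^2 \geq Y_{j_0}^2$, and on the event $A := \{M \geq \nu(R)\}$ we have $Y_{j_0} = \varepsilon + \delta M \geq \varepsilon + \delta\nu(R) \geq 0$ (using $\varepsilon\ge 0$, $\delta>0$, and $\nu(R)\geq 0$, the latter requiring $R$ large enough that $R \geq 4\log R$). Hence $Y_{j_0}^2 \geq (\varepsilon+\delta\nu(R))^2$ on $A$, so that
\[
\widehat F_R(\bm w) \;\geq\; \mathbb E\!\left[Y_{j_0}^2\,\mathbb 1_A\right] \;\geq\; \big(\varepsilon+\nu(R)\delta\big)^2\,\Pr(A).
\]
Dividing by $F(\bm w)=\varepsilon^2+\delta^2$ and expanding $(\varepsilon+\nu(R)\delta)^2$ reproduces the claimed bound, provided I can show $\Pr(A) = \Pr(M\geq\nu(R)) \geq 1 - 1/R$.

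It therefore remains to establish $\Pr(M \geq \nu(R)) \geq 1 - 1/R$. Writing $\bar\Phi$ for the standard normal tail, $\Pr(M < \nu) = \Phi(\nu)^R = (1-\bar\Phi(\nu))^R \leq e^{-R\bar\Phi(\nu)}$, so it suffices to prove the tail lower bound $\bar\Phi(\nu(R)) \geq \tfrac{\log R}{R}$, which forces $e^{-R\bar\Phi(\nu)} \leq 1/R$. This last tail estimate is the crux of the argument and the step I expect to be the main obstacle, since recovering the precise constant $\tfrac{\pi}{2}$ inside $\nu(R)$ hinges on using the right inequality rather than a generic Gaussian tail bound. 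I would use the Pólya-type bound $\bar\Phi(t) \geq \tfrac12\big(1 - \sqrt{1 - e^{-2t^2/\pi}}\big)$ (equivalently $\mathrm{erfc}(x) \geq 1 - \sqrt{1 - e^{-4x^2/\pi}}$), together with the elementary inequality $1 - \sqrt{1-u} \geq u/2$ for $u\in[0,1]$, to obtain $\bar\Phi(t) \geq \tfrac14 e^{-2t^2/\pi}$. Plugging in $t=\nu(R)$, where by definition $\tfrac{2}{\pi}\nu(R)^2 = \log\tfrac{R}{4\log R}$, gives $e^{-2\nu(R)^2/\pi} = \tfrac{4\log R}{R}$ and hence $\bar\Phi(\nu(R)) \geq \tfrac14\cdot\tfrac{4\log R}{R} = \tfrac{\log R}{R}$, exactly as required. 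Assembling the three pieces then completes the proof.
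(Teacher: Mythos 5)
Your proposal is correct and follows essentially the same route as the paper's proof: both reduce to showing $\mathbb{P}(\max_j Z_j \geq \nu(R)) \geq 1 - 1/R$ via the chain $\Phi(\nu)^R = (1-\bar\Phi(\nu))^R \leq e^{-R\bar\Phi(\nu)}$ combined with the bound $\mathrm{erf}(t) \leq \sqrt{1-e^{-4t^2/\pi}}$ and $\sqrt{1-u}\leq 1-u/2$, then lower bound $\widehat F_R(\bm w) \geq (\varepsilon+\nu(R)\delta)^2(1-1/R)$ on that event. Your explicit reduction to $\varepsilon \geq 0$ (needed so that $\max_j Y_j \geq \varepsilon + \nu(R)\delta$ implies the squared bound) is a small point of rigor the paper glosses over, but the argument is otherwise identical.
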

The proof of \Cref{rho-bound} is in \Cref{rho-bound-pf}.
\begin{corollary}
    In the setting of \Cref{rho-bound}, if $\varepsilon(\bm{w}) \leq \mathcal{O}(\delta(\bm{w}))$ for all $\bm{w}$ in our region of optimization, then: 
    $$\rho_R^{*} \geq \Bigg(1 + \Omega\Bigg(\log \frac{R}{\log R} + \sqrt{\log \frac{R}{\log R}}\Bigg)\Bigg)\Big(1 - \frac{1}{R}\Big).$$
\end{corollary}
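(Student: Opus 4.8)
The plan is to start from the lower bound in \Cref{rho-bound} and observe that its numerator (before the $(1-1/R)$ factor) is a perfect square:
\[
\varepsilon^2(\bm{w}) + \nu^2(R)\delta^2(\bm{w}) + 2\nu(R)\varepsilon(\bm{w})\delta(\bm{w}) = \big(\varepsilon(\bm{w}) + \nu(R)\delta(\bm{w})\big)^2 .
\]
Hence $\rho_R(\bm{w}) \geq \frac{(\varepsilon(\bm{w}) + \nu(R)\delta(\bm{w}))^2}{\varepsilon^2(\bm{w}) + \delta^2(\bm{w})}\big(1 - \tfrac{1}{R}\big)$. Assuming $\delta(\bm{w}) > 0$ (the degenerate case $\delta(\bm{w}) = 0$ is the trivial constant-loss case already disregarded, where $\rho_R = 1$), I would divide numerator and denominator by $\delta^2(\bm{w})$ and set $r := \varepsilon(\bm{w})/\delta(\bm{w})$, reducing the bound to the scalar quantity $g(r) := (r + \nu)^2/(r^2 + 1)$ with $\nu = \nu(R)$. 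I read the hypothesis $\varepsilon(\bm{w}) \leq \mathcal{O}(\delta(\bm{w}))$, holding uniformly over the optimization region, as $|r| \leq C$ for a fixed constant $C$, so it suffices to lower bound $\min_{|r|\leq C} g(r)$.

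Next I would minimize $g$ over $[-C, C]$ by elementary calculus. A short computation gives $g'(r) = 2(r+\nu)(1 - r\nu)/(r^2+1)^2$, so for $R$ large enough that $\nu > C$ the factor $r+\nu$ is positive on the whole interval and $g$ is increasing on $[-C, 1/\nu]$ and decreasing on $[1/\nu, C]$. The minimum is therefore attained at an endpoint, and since $\nu > C > 0$ it is the left one, $g(-C) = (\nu - C)^2/(C^2+1)$. Consequently
\[
g(-C) - 1 = \frac{\nu^2 - 2C\nu - 1}{C^2 + 1}.
\]
(If one instead reads the hypothesis as $0 \le \varepsilon \le C\delta$, i.e. $r \in [0,C]$, the relevant endpoint is $g(C) = (\nu+C)^2/(C^2+1)$, giving $(\nu^2 + 2C\nu - 1)/(C^2+1)$; either way the conclusion below is unchanged.)

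Finally I would substitute the asymptotics of $\nu(R)$. Writing $\ell_R := \log\frac{R}{\log R}$, we have $\nu^2(R) = \frac{\pi}{2}\big(\ell_R - \log 4\big) = \Theta(\ell_R)$ and $\nu(R) = \Theta(\sqrt{\ell_R})$ for large $R$, so $g(-C) - 1 = \Omega(\nu^2 + \nu) = \Omega\big(\ell_R + \sqrt{\ell_R}\big)$, with the implied constant depending only on $C$. Because this lower bound depends on $\bm{w}$ only through $r \in [-C,C]$, it is uniform in $\bm{w}$, so taking the infimum over $\bm{w} \notin \Phi_F$ preserves it and yields
\[
\rho_R^* \geq \Big(1 + \Omega\big(\ell_R + \sqrt{\ell_R}\big)\Big)\Big(1 - \tfrac{1}{R}\Big),
\]
as claimed. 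The main point to be careful about is precisely this uniformity: the argument needs a single constant $C$ with $|\varepsilon(\bm{w})| \leq C\delta(\bm{w})$ for every $\bm{w}$ in the optimization region (which is exactly what the $\mathcal{O}(\cdot)$ in the hypothesis supplies), together with $R$ large enough that $\nu(R) > C$ so that the endpoint analysis of $g$ is valid; the rest is a routine one-variable minimization.
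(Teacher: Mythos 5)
Your proposal is correct and follows the route the paper evidently intends (the corollary is stated without a separate proof as an immediate consequence of \Cref{rho-bound}): substitute $\varepsilon(\bm{w})\leq C\,\delta(\bm{w})$ into the bound, reduce to the one-variable quantity $g(r)=(r+\nu)^2/(r^2+1)$, and use $\nu(R)=\Theta\big(\sqrt{\log\tfrac{R}{\log R}}\big)$. Your explicit endpoint minimization over $r\in[-C,C]$ is a bit more careful than the intended one-liner (which implicitly takes $\varepsilon\geq 0$ so that the cross term $2\nu\varepsilon\delta$ helps), but it is harmless and correctly yields the uniform $1+\Omega\big(\log\tfrac{R}{\log R}+\sqrt{\log\tfrac{R}{\log R}}\big)$ factor before passing to the infimum.
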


\section{Approximate Losses via Early Exiting in Neural Networks}
\label{early-exit}
{For sample selection in GSGD-like algorithms, we propose to use an approximation of the actual loss which is computed on the \enquote{early} predictions obtained by applying the linear head after the final layer to an intermediate layer's representation (instead of the final layer's representation). Specifically, suppose $\bm{\theta}$ is the linear head and for a sample $\bm{x}$ with label $y$, $\mathcal{R}(\bm{x})$ and $\widetilde{\mathcal{R}}(\bm{x})$ are the final layer's and some intermediate layer's representation with the same dimension as $\bm{\theta}$, respectively. Then the approximate and actual losses with the cross-entropy loss function denoted by $\ell$ are $\ell(y, \text{softmax}(\bm{\theta}^\top \widetilde{\mathcal{R}}(\bm{x})))$\footnote{For a vector $\bm{v} = [v_1,\ldots,v_d]$, $\text{softmax}(\bm{v}) = [\hat{v}_1,\ldots, \hat{v}_d]$ with $\hat{v}_i = {\exp({v_i})}/{\sum_{j=1}^d \exp({v_j})}$.} and $\ell(y, \text{softmax}(\bm{\theta}^\top {\mathcal{R}}(\bm{x})))$, respectively; we use the former to select samples for performing the gradient-based update.}

We shall now consider a feedforward linear neural network \citep{kawaguchi2016deep} and quantify the probability of correctly picking the sample with the largest actual loss when using early exiting.

\subsection{Probability of Correctly Selecting the Sample with the Largest Actual Loss}
\label{early-exit-theory}
We consider a binary classification problem, where each sample $\bm{x}$ has a binary label $y \in \{0,1\}$, which is a deterministic function of $\bm{x}$. Our model is a $k$-layer linear feed-forward network parameterized by 
$\{\bm{W}_i\}_{i=1}^d \in \mathbb{R}^{d \times d}$ and $\bm{\theta} \in \mathbb{R}^d$, where the soft prediction $\hat{y}$ for $\bm{x}$ is:
\begin{equation}
    \label{eq:7-2-jan30}
    \hat{y} = \text{sig}\Big(\bm{\theta}^\top \big(\bm{W}_k \times \bm{W}_{k-1} \times \ldots \times \bm{W}_1\big) \bm{x}\Big).
\end{equation}
In \cref{eq:7-2-jan30}, $\text{sig}(.)$ is the sigmoid function\footnote{For binary classification problems, the sigmoid function essentially plays the role of the softmax function mentioned earlier.} as defined in \Cref{notation}. For any $j \in [k]$, let us define:
\begin{equation}
    \bm{A}_j := \bm{W}_j \times \ldots \times \bm{W}_1 \text{ and } \bm{B}_j := \bm{W}_{k} \times \ldots \times \bm{W}_{j+1}.
\end{equation}
Then,
\begin{equation}
    \hat{y}_j = \text{sig}\big(\bm{\theta}^\top \bm{A}_j \bm{x}\big)
\end{equation}
is the \enquote{early prediction} for $\bm{x}$ at the $j^{\text{th}}$ layer. Note that $\hat{y} = \hat{y}_k$, where:
\begin{equation}
    \hat{y}_k = \text{sig}\big(\bm{\theta}^\top \bm{B}_j \bm{A}_j \bm{x}\big).
\end{equation}
In the context of GSGD, we once again focus on the case of $R=2$. For any two i.i.d. samples $\bm{x}^{(1)}$ and $\bm{x}^{(2)}$, let $y^{(1)}$ and $y^{(2)}$ $\in \{0,1\}$ be the corresponding ground truth labels and let $\hat{y}_j^{(1)}$ and $\hat{y}_j^{(2)}$ be the corresponding early predictions at the $j^{\text{th}}$ layer. Further, let $\ell_j^{(1)}$ and $\ell_j^{(2)}$ be the corresponding cross-entropy losses of the early predictions at the $j^{\text{th}}$ layer, i.e.,
\begin{equation}
    \ell_j^{(i)} = - y^{(i)}\log\big(\hat{y}_j^{(i)}\big) - (1-y^{(i)})\log\big(1-\hat{y}_j^{(i)}\big),
\end{equation}
for $i \in \{1,2\}$. In the context of GSGD, $\ell_j^{(1)}$ and $\ell_j^{(2)}$ are the approximate function values, whereas $\ell_k^{(1)}$ and $\ell_k^{(2)}$ are the actual function values. In this section, we are interested in quantifying the probability (over the randomness of data) that early exiting at the $j^\text{th}$ layer preserves the argmax; specifically, we wish to quantify
\begin{equation}
    p_{j} := \mathbb{P}_{\bm{x}^{(1)}, \bm{x}^{(2)}}\Big(\text{arg max}_{i \in [1,2]} \ell_j^{(i)} = \text{arg max}_{i \in [1,2]} \ell_k^{(i)}\Big).
\end{equation}
Note that $p_k = 1$. We will lower bound $p_{j}$ under the following distributional assumption on the data.
\begin{assumption}
    \label{asmp-gauss-early-exit}
    Let $\bar{y}^{(1)} = 2y^{(1)} - 1$ and $\bar{y}^{(2)} = 2y^{(2)} - 1$ be the centered ground truth labels ($\in \{-1,1\}$) of  $\bm{x}^{(1)}$ and $\bm{x}^{(2)}$, respectively. 
    Then, $\bar{y}^{(1)} \bm{x}^{(1)} - \bar{y}^{(2)} \bm{x}^{(2)} \sim \mathcal{N}(\vec{0}_d, 2 \bm{\textup{I}}_d)$, over the randomness of $\bm{x}^{(1)}$ and $\bm{x}^{(2)}$.\footnote{We only mention the randomness over $\bm{x}^{(1)}$ and $\bm{x}^{(2)}$ because $\bar{y}^{(1)}$ and $\bar{y}^{(2)}$ are deterministic functions of $\bm{x}^{(1)}$ and $\bm{x}^{(2)}$, respectively.} 
\end{assumption}
The Gaussianity assumption above has been made in order to obtain an exact expression for $p_j$ (in \Cref{thm-neural-net} below). \Cref{asmp-gauss-early-exit} can be potentially relaxed to obtain only lower bounds for  $p_j$. 
\begin{theorem}
    \label{thm-neural-net}
    Suppose \Cref{asmp-gauss-early-exit} holds. Define $$\beta_j := \frac{\langle \bm{A}_j^\top \bm{\theta}, \bm{A}_j^\top \bm{B}_j^\top \bm{\theta} \rangle}{\|\bm{A}_j^\top \bm{\theta}\|_2 \|\bm{A}_j^\top \bm{B}_j^\top \bm{\theta}\|_2}.$$
    We restrict our attention to the case of $\beta_j \geq 0$. Then:
    \begin{equation}
        \label{eq:13-jan30}
        p_j = 1 - \frac{1}{2 \sqrt{\pi}} \int_{0}^{\infty} \exp\Big(-\frac{y^2}{4}\Big) \textup{erfc}\Bigg(\frac{\beta_j y}{2 \sqrt{1-\beta_j^2}}\Bigg) dy,
    \end{equation}
    where $\textup{erfc}(.)$ is as defined in \cref{erf}. We also have the following lower bound for $p_j$:
    \begin{equation}
        \label{eq:14-jan30}
        p_j \geq 1 - \sqrt{\frac{2 - 2\beta_j^2}{2-\beta_j^2}}.
    \end{equation}
\end{theorem}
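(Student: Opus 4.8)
The plan is to reduce the argmax-preservation event to a sign-agreement event between two jointly Gaussian scalars, and then evaluate the corresponding bivariate-normal orthant probability.

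First I would rewrite the cross-entropy loss in terms of the pre-activation $z_j^{(i)} := \bm{\theta}^\top \bm{A}_j \bm{x}^{(i)}$. Using $\text{sig}(z) = 1/(1+e^z)$ and checking the cases $y^{(i)} = 1$ and $y^{(i)} = 0$ (equivalently $\bar{y}^{(i)} = \pm 1$) separately, one obtains the unified expression $\ell_j^{(i)} = \log\!\big(1 + e^{\bar{y}^{(i)} z_j^{(i)}}\big)$, and likewise $\ell_k^{(i)} = \log\!\big(1 + e^{\bar{y}^{(i)} z_k^{(i)}}\big)$ with $z_k^{(i)} = \bm{\theta}^\top \bm{B}_j \bm{A}_j \bm{x}^{(i)}$ (since $\bm{B}_j \bm{A}_j = \bm{W}_k \cdots \bm{W}_1$). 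As $t \mapsto \log(1+e^t)$ is strictly increasing, for $R = 2$ the loss argmax at layer $j$ is sample $1$ iff $\bar{y}^{(1)} z_j^{(1)} > \bar{y}^{(2)} z_j^{(2)}$, and similarly at layer $k$. Setting $\bm{v} := \bar{y}^{(1)} \bm{x}^{(1)} - \bar{y}^{(2)} \bm{x}^{(2)}$ and using that each $\bar{y}^{(i)}$ is a scalar, I get $u_j := \bar{y}^{(1)} z_j^{(1)} - \bar{y}^{(2)} z_j^{(2)} = \langle \bm{A}_j^\top \bm{\theta}, \bm{v}\rangle$ and $u_k := \bar{y}^{(1)} z_k^{(1)} - \bar{y}^{(2)} z_k^{(2)} = \langle \bm{A}_j^\top \bm{B}_j^\top \bm{\theta}, \bm{v}\rangle$. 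The argmax is preserved exactly when $u_j$ and $u_k$ share the same sign, hence $p_j = \mathbb{P}(u_j u_k > 0)$.

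Next I would invoke \cref{asmp-gauss-early-exit}, which gives $\bm{v} \sim \mathcal{N}(\vec{0}_d, 2\bm{\textup{I}}_d)$. Then $(u_j, u_k)$ are two linear functionals of a centered isotropic Gaussian, hence mean-zero and jointly Gaussian; the factor $2$ cancels in the ratio defining their correlation, which is therefore exactly $\beta_j = \langle \bm{A}_j^\top \bm{\theta}, \bm{A}_j^\top \bm{B}_j^\top \bm{\theta}\rangle / (\|\bm{A}_j^\top \bm{\theta}\|_2 \|\bm{A}_j^\top \bm{B}_j^\top \bm{\theta}\|_2)$. So $p_j$ is the probability that a standard bivariate normal with correlation $\beta_j$ avoids the two ``disagreeing'' quadrants. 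To produce \eqref{eq:13-jan30} I would standardize and write $\hat{u}_k = \beta_j \hat{u}_j + \sqrt{1 - \beta_j^2}\,\xi$ with $\xi$ an independent standard normal, condition on $\hat{u}_j$, and express the mismatch probability as $2\int_0^\infty \varphi(s)\,\Phi\!\big(-\beta_j s / \sqrt{1-\beta_j^2}\big)\,ds$, where $\varphi, \Phi$ are the standard normal density and CDF. Using $\Phi(-t) = \tfrac12 \text{erfc}(t/\sqrt{2})$ and the substitution $s = y/\sqrt{2}$ collapses this to $1 - p_j = \frac{1}{2\sqrt{\pi}}\int_0^\infty e^{-y^2/4}\,\text{erfc}\!\big(\beta_j y / (2\sqrt{1-\beta_j^2})\big)\,dy$, which is \eqref{eq:13-jan30}; equivalently this equals the random-hyperplane value $\arccos(\beta_j)/\pi$. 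The restriction $\beta_j \ge 0$ ensures $p_j \ge \tfrac12$ and keeps the $\text{erfc}$ argument nonnegative.

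For the lower bound \eqref{eq:14-jan30}, I would upper bound $1 - p_j$ by replacing the complementary error function in \eqref{eq:13-jan30} with an elementary tail majorant of $\text{erfc}$ on $[0,\infty)$, and then evaluate the resulting elementary Gaussian integral $\int_0^\infty e^{-a y^2}\,dy = \tfrac{\sqrt{\pi}}{2\sqrt{a}}$ in closed form; collecting constants produces a bound of the stated form $\sqrt{(2 - 2\beta_j^2)/(2 - \beta_j^2)}$. The conceptual steps are routine, so I expect the main obstacle to be purely computational: choosing the $\text{erfc}$ majorant that, after integration, yields \emph{precisely} the claimed closed form (for instance the naive choice $\text{erfc}(t) \le e^{-t^2}$ already gives a valid but different constant), and correctly tracking the $\beta_j$-dependence through the substitution. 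Verifying the joint Gaussianity and the correlation identity is immediate once the sign reduction is in place, so that reduction — together with the clean rewriting of the cross-entropy loss as a monotone function of $\bar{y}^{(i)} z_j^{(i)}$ — is really the crux that makes the argument go through.
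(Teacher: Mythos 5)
Your proposal is correct and follows essentially the same route as the paper: reduce argmax preservation to sign agreement of the two jointly Gaussian functionals $\langle \bm{A}_j^\top\bm{\theta},\bm{z}\rangle$ and $\langle \bm{A}_j^\top\bm{B}_j^\top\bm{\theta},\bm{z}\rangle$ with correlation $\beta_j$, evaluate the resulting orthant probability (you condition via $\hat{u}_k=\beta_j\hat{u}_j+\sqrt{1-\beta_j^2}\,\xi$ where the paper integrates the joint density directly, but these are the same computation), and then majorize $\textup{erfc}$ inside the integral. The one piece you left open resolves exactly as you anticipated: the majorant is the standard Gaussian tail bound $\textup{erfc}(t)=2\,\mathbb{P}(Z\geq t)\leq 2e^{-t^2/2}$, which after applying $\int_0^\infty e^{-y^2/(2t^2)}\,dy=\sqrt{\pi/2}\,t$ yields precisely $\sqrt{(2-2\beta_j^2)/(2-\beta_j^2)}$.
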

The proof of \Cref{thm-neural-net} can be found in \Cref{thm-neural-net-pf}. Note that $\beta_j$ is the (normalized) correlation between $\bm{A}_j^\top \bm{\theta}$ and $\bm{A}_j^\top \bm{B}_j^\top \bm{\theta}$. As per \Cref{thm-neural-net}, $p_j = 1$ when $\beta_j = 1$; this makes sense because $\beta_j = 1$ implies $\bm{A}_j^\top \bm{\theta}$ and $\bm{A}_j^\top \bm{B}_j^\top \bm{\theta}$ are parallel and so the argmax operation is unaffected if we use $\hat{y}_j$ instead of $\hat{y}_k$. It is worth mentioning that the lower bound for $p_j$ in \cref{eq:14-jan30} is loose for small $\beta_j$, but we believe it is tight up to constant factors for $\beta_j \approx 1$; in particular, it is exact in the case of $\beta_j = 1$. Specifically, we have the following simple corollary for $\beta_j \approx 1$. 
\begin{corollary}
    \label{cor-beta-feb1}
    In the setting of \Cref{thm-neural-net}, suppose $\beta_j = 1 - \tau_j$ where $\tau_j \to 0$. Then, $p_j \geq 1 - \mathcal{O}(\sqrt{\tau_j})$.
\end{corollary}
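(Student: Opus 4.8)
The plan is to obtain the corollary as a direct asymptotic consequence of the explicit lower bound \eqref{eq:14-jan30} in \Cref{thm-neural-net}, namely $p_j \geq 1 - \sqrt{(2 - 2\beta_j^2)/(2 - \beta_j^2)}$. Since $\beta_j = 1 - \tau_j \to 1$, we are firmly in the regime $\beta_j \geq 0$ required by the theorem, so the bound is applicable. The entire task therefore reduces to controlling the size of the square-root term as $\tau_j \to 0$.

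First I would substitute $\beta_j = 1 - \tau_j$ and expand both the numerator and the denominator of the fraction inside the square root. Writing $\beta_j^2 = 1 - 2\tau_j + \tau_j^2$, the numerator becomes $2 - 2\beta_j^2 = 4\tau_j - 2\tau_j^2 = 2\tau_j(2 - \tau_j)$, and the denominator becomes $2 - \beta_j^2 = 1 + 2\tau_j - \tau_j^2$. The key observation is that the denominator tends to $1$ (and in particular stays bounded away from $0$) as $\tau_j \to 0$, whereas the numerator vanishes linearly in $\tau_j$. Hence the fraction is $\frac{2\tau_j(2-\tau_j)}{1 + 2\tau_j - \tau_j^2} = 4\tau_j\,(1 + o(1))$.

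Taking the square root then gives $\sqrt{(2-2\beta_j^2)/(2-\beta_j^2)} = 2\sqrt{\tau_j}\,(1 + o(1)) = \mathcal{O}(\sqrt{\tau_j})$, and plugging this back into \eqref{eq:14-jan30} yields $p_j \geq 1 - \mathcal{O}(\sqrt{\tau_j})$, as claimed. If an entirely non-asymptotic statement is preferred, I would instead bound the denominator below by $1$ for $\tau_j \leq 1$ (where $1 + 2\tau_j - \tau_j^2 = 1 + \tau_j(2 - \tau_j) \geq 1$) and the numerator above by $4\tau_j$ (using $2 - \tau_j \leq 2$), which gives the clean inequality $p_j \geq 1 - 2\sqrt{\tau_j}$ directly.

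There is essentially no analytic obstacle here: the only point requiring care is confirming that the denominator does not degenerate, so that the linear-in-$\tau_j$ numerator is not amplified after taking the square root; since $2 - \beta_j^2 \to 1$, this is immediate. The substantive content has already been discharged in the proof of \Cref{thm-neural-net}, and this corollary merely reads off the leading-order behavior of that bound near $\beta_j = 1$.
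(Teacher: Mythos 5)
Your proposal is correct and follows exactly the same route as the paper: substitute $\beta_j = 1-\tau_j$ into the bound \eqref{eq:14-jan30}, simplify to $1 - \sqrt{2\tau_j(2-\tau_j)/(1+\tau_j(2-\tau_j))}$, and observe the denominator stays near $1$ so the whole expression is $1-\mathcal{O}(\sqrt{\tau_j})$. Your added explicit constant ($p_j \geq 1 - 2\sqrt{\tau_j}$) is a harmless refinement the paper does not state.
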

In simple words, if $\bm{A}_j^\top \bm{\theta}$ is strongly correlated with $\bm{A}_j^\top \bm{B}_j^\top \bm{\theta}$, then early exiting at the $j^{\text{th}}$ layer preserves the argmax with high probability.

In the next section, we empirically demonstrate the efficacy of early exiting in training a transformer model as well as a ResNet model.

\section{Empirical Evaluation}
\label{expts}
We demonstrate the efficacy of early exiting in accelerating the training of a BERT base model and a slightly modified version of ResNet-50 from scratch. Specifically, we apply early exiting for selecting samples in a mini-batch version of \textit{greedy} AdamW which is a simple extension of the greedy SGD idea to AdamW. We refer to this practical early exit-based sample selection or \enquote{sifting} strategy as \textit{SIFT}. We describe SIFT in more detail later but at a high level, we propose to backpropagate on only 50\% of the examples in a batch with the \textit{highest approximate losses obtained via early exiting}. We would like to emphasize that \textit{our novelty is in the light-weight sifting process for training via early exiting} and not the idea of backpropagating on the samples with large losses.\footnote{This is the same idea as OSGD \citep{kawaguchi2016deep} and OHEM \citep{shrivastava2016training}.}  

\subsection{BERT Base Model}
\label{sec:bert}
Here we consider the task of pretraining a BERT base model from scratch with the masked language modeling (MLM) loss. The BERT base model used in  our experiments consists of 12 layers with a hidden dimension of 768; \textit{the total number of parameters is 110M}. We train on BookCorpus \citep{zhu2015aligning} and English Wikipedia which are two diverse and extensive standard corpora. 
The validation set for assessing the model's performance is derived from the development partition of the training corpus. This partition ensures that the validation set represents a diverse and unbiased subset of the overall data. 
In our training, we use 512 as the maximum sequence length. To process the input data, we use the \enquote{bert-base-uncased} tokenizer from the Hugging Face model repository. 
The masking is applied after tokenization with a uniform masking rate of 15\%. Our experiments were conducted on AWS p4d.24xlarge instances (8 NVIDIA A100 Tensor core GPUs). 

Our baseline algorithm is the vanilla approach with no kind of sample filtering. The micro-batch per GPU is set to 32 sequences (so there are 32 sequences per GPU * 8 GPUs * 512 tokens = 131072 tokens per batch). 
We will now elaborate on SIFT which does greedy selection based on early exit loss. 

\textbf{Loss-based SIFT:} We select 50\% sequences per batch (for training) with the largest MLM losses  computed on the early predictions obtained from an intermediate layer in the way described in the beginning of \Cref{early-exit}. Specifically, we show results for the first, second, third, sixth and last (i.e., twelfth) layer. The micro-batch per GPU is set to 64 sequences. Since we select half of the sequences for training, the effective batch size per GPU is 32 which is the same as that of baseline.

The early-exit based filtering idea is pretty general in the sense that it can be seamlessly integrated with other sample selection schemes. In particular, we also tried greedy selection based on \textit{early exit entropy} (instead of loss). We describe it below. 

\textbf{Entropy-based SIFT:} Everything is the same as loss-based SIFT except that we select 50\% sequences per batch whose early predictions obtained from an intermediate layer have the largest entropies instead of MLM losses. Prediction entropy has been used a measure of model uncertainty for active learning; see for e.g., \citet{ren2021survey,gal2017deep}. So our proposal of entropy-based SIFT may also be of interest in active learning with large-scale models.

It is worth mentioning that in our implementation, the forward propagation for selecting samples in SIFT is implemented naively before full forward and backward propagation for the update step; one could potentially make this more efficient. However, even without any optimization of the initial forward propagation for sample selection, we obtain significant gains as we discuss later. 
For the experiments with SIFT, we do not perform any sample filtering for the first 20K steps; this is to provide an initial warm-up period and is aligned with the suggestion of \citet{kawaguchi2019ordered} for GSGD. The total number of update steps for each algorithm is 800K. We use the hyper-parameters suggested in the pretraining part of the original BERT paper \citep{devlin2018bert}; we defer these details to \Cref{expt-details}.

We would like to point out that the goal of our experiments is to only show the efficacy of early exiting for approximately selecting samples with large losses/entropies and \textit{not} propose a SOTA sample selection scheme.

\begin{figure}[t]
  \centering
  \includegraphics[width=0.5\textwidth]{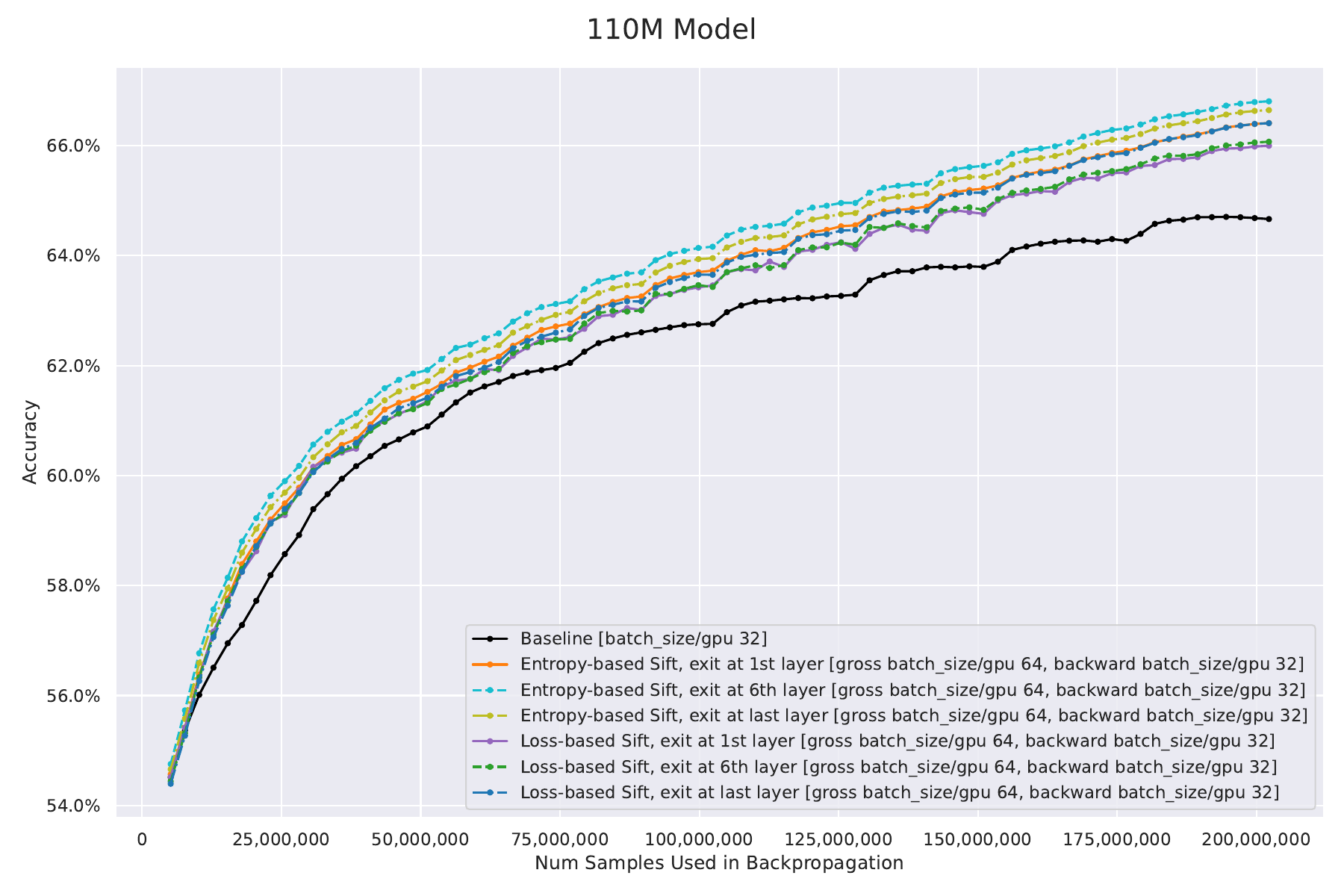}
  \caption{\textbf{Validation accuracy vs. backpropagation sample complexity.} In terms of performance, entropy-based SIFT $>$ loss-based SIFT $>$ baseline. All the accuracies are listed in \Cref{tab:table1b} but for quick reference, loss-based and entropy-based SIFT with exit at the first layer are better than the baseline by 1.33\% and 1.75\%, respectively. 
  For loss-based SIFT, exit at the last layer has the best performance while for entropy-based SIFT, exit at the sixth layer has the best performance.
  }
  \label{fig:perf_1}
\end{figure}

\begin{figure}[t]
  \centering
  \includegraphics[width=0.5\textwidth]{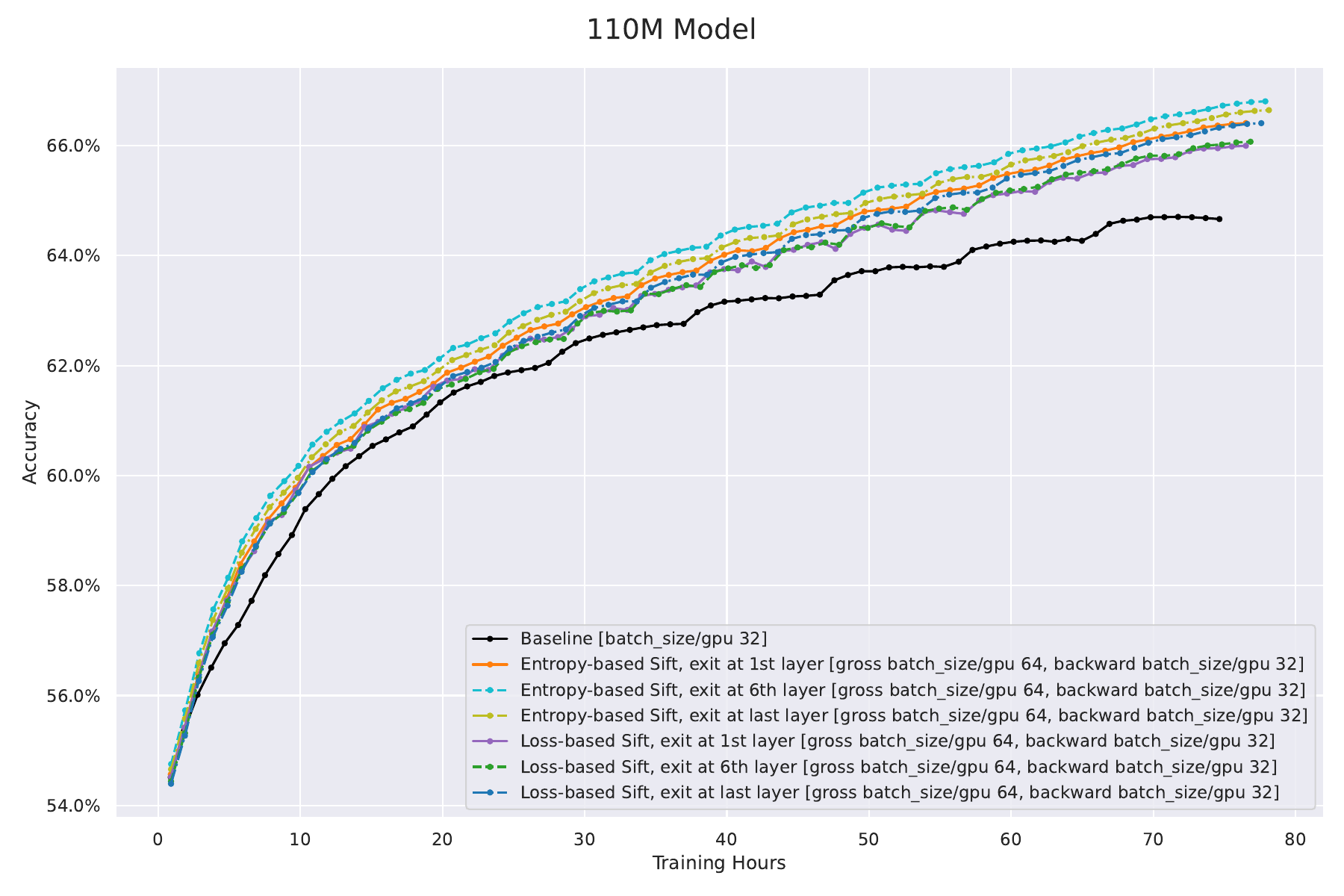}
  \caption{\textbf{Validation accuracy vs. wall-clock time.} The observations and trends are the same as \Cref{fig:perf_1}; please see the discussion therein.}
  \label{fig:perf_2}
\end{figure}

\begin{table}[!htbp]
\caption{\textbf{Validation accuracy at the last step.}
For loss-based SIFT, exit at the last layer has the best performance while the performance of other layers is nearly the same. However for entropy-based SIFT, exit at the sixth layer has the best performance.}
\vspace{0.2 cm}
\label{tab:table1b}
  \centering
\begin{tabular}{|c|c|c|}
  \hline
  Algorithm & SIFT Layer \# & Val. Accuracy  \\
  \hline
   Baseline         &  N/A & 0.6466\\
  \hline
   Loss-based SIFT        &  1 & 0.6599\\
   Loss-based SIFT         &  2 & 0.6597\\
   Loss-based SIFT        & 3 & 0.6604\\
   Loss-based SIFT          & 6 & 0.6607\\
   Loss-based SIFT          & 12 & 0.6640\\
   \hline
   Entropy-based SIFT          &  1 & 0.6641\\
   Entropy-based SIFT         & 2 & 0.6672\\
   Entropy-based SIFT          & 3 & 0.6678\\
   Entropy-based SIFT         & 6 & 0.6680\\
   Entropy-based SIFT       & 12 & 0.6664\\
   \hline   
\end{tabular}
\end{table}

\textbf{Results.} Figures \ref{fig:perf_1} and \ref{fig:perf_2} show the performance of SIFT with exits at the first, sixth and last layers\footnote{We do not show the plots for exits at the second and third layers here to avoid congestion; we report the validation accuracies of all these layers in \Cref{tab:table1b}.} and baseline w.r.t. the number of samples used for backpropagation (i.e., backpropagation sample complexity) and the number of training hours, respectively. We report the validation accuracy at the last step with early exit at all the layers we considered in \Cref{tab:table1b}. Please see the figure and table captions for detailed discussion but the key takeaways are as follows:
\begin{itemize}
    \item SIFT is better than the baseline in terms of backpropagation sample complexity as well as wall-clock time.
    \item Entropy-based SIFT does better than loss-based SIFT.
    \item For entropy-based SIFT, exit at the sixth layer has the best performance while for loss-based SIFT, exit at the last layer has the best performance.
\end{itemize}
In \Cref{tab:table2} (\Cref{expt-details}), we report the total time spent in the forward pass for selecting samples in SIFT as a function of the layer number.

\textit{Overall our results here show that SIFT can yield significant gains for BERT pretraining}.

\subsection{ResNet-50}
\label{resnet-50}
Here we consider training a slightly modified version of ResNet-50 on CIFAR-100 and Food-101 \cite{bossard2014food} which is a harder dataset than CIFAR-100 consisting of 101 classes. The modification has been made to make early exiting feasible in ResNet-50; we describe the modification and early exit details in \Cref{resnet-details}. Importantly, here we will \textit{tune the learning rates} for both SIFT and the baseline (i.e., the vanilla approach with no sample filtering). We use the one-cycle learning rate schedule proposed for \textit{fast training} in \cite{smith2019super} and available in PyTorch (\url{https://pytorch.org/docs/stable/generated/torch.optim.lr_scheduler.OneCycleLR.html}). Specifically, we tune the maximum learning rate (\enquote{max\textunderscore lr}) for the schedule and thus show results with different learning rates. We use default values for all other hyper-parameters of the schedule. We consider a scenario with limited training budget where we can train each algorithm for 100 epochs. Training is done with the standard cross-entropy loss function. Other empirical details are mentioned in \Cref{resnet-details}. 

Tables \ref{tabA} and \ref{tabC} show the comparisons of SIFT and baseline with the \textit{same training time} (analogous to \Cref{fig:perf_2}), whereas Tables \ref{tabB} and \ref{tabD} show comparisons with the same \textit{backpropagation sample complexity or number of gradient updates} (analogous to \Cref{fig:perf_1}) for CIFAR-100 and Food-101, respectively. In these experiments, we saw that loss-based sampling worked better than entropy-based sampling; so we only report the results for loss-based sampling here. 

Please refer to the table captions for detailed discussion but in summary, these results show that \textbf{SIFT is better even if we tune the learning rate}. Overall our results here show that SIFT can yield significant gains in training a ResNet-50 (appropriately modified to make early exiting feasible) from scratch. 
 
\begin{table}[!htbp]
\caption{\textbf{CIFAR-100 with same training time.} We run the baseline for 100 epochs and SIFT with early exit and SIFT with last layer exit \textit{till the time it takes to run 100 epochs of the baseline} (analogous to \Cref{fig:perf_2}) with different learning rates. The corresponding \textit{test accuracies} are reported. The best test accuracy of each method is in bold font. So with the \textit{same training time}, the \textbf{best test accuracy of SIFT with early exit $>$ best test accuracy of SIFT with last layer exit $>$ best test accuracy of baseline}.
}
  \label{tabA}
  \centering
    \begin{tabular}{|c|c|c|c|c|c|c|}
    \hline
    max\textunderscore lr & Baseline & \makecell{SIFT w/ \\ early exit} & \makecell{SIFT w/ last \\ layer exit} \\
    \hline
    $5e-2$ & $67.68$ & $66.64$ & $63.92$ \\
    $1e-2$ & $\bm{69.04}$ & $\bm{73.45}$ & ${70.13}$\\
    $5e-3$ & $69.02$ & $73.26$ & $\bm{72.91}$ \\
    $1e-3$ & $58.54$ & $66.85$ & $71.09$ \\
   \hline   
    \end{tabular}
\end{table}

\begin{table}[!htbp]
\caption{\textbf{CIFAR-100 with same backpropagation sample complexity.} We run all algorithms for 100 epochs so that the \textit{backpropagation sample complexity (i.e., \# of gradient updates) is the same for all algorithms} (analogous to \Cref{fig:perf_1}) with different learning rates. The corresponding \textit{test accuracies} are reported. The best test accuracy of each method is in bold font. So with the \textit{same gradient complexity}, the \textbf{best test accuracy of SIFT with last layer exit $>$ best test accuracy of SIFT with early exit $>$ best test accuracy of baseline}.
}
  \label{tabB}
  \centering
    \begin{tabular}{|c|c|c|c|c|c|c|}
    \hline
    max\textunderscore lr & Baseline & \makecell{SIFT w/ \\ early exit} & \makecell{SIFT w/ last \\ layer exit} \\
    \hline
    $5e-2$ & $67.68$ & $68.16$ & $68.12$ \\
    $1e-2$ & $\bm{69.04}$ & $\bm{75.41}$ & $74.48$\\
    $5e-3$ & $69.02$ & $75.06$ & $\bm{76.93}$\\
    $1e-3$ & $58.54$ & $68.35$ & $75.14$\\
   \hline   
    \end{tabular}
\end{table}

\begin{table}[!htbp]
\caption{\textbf{Food-101 with same training time.} All details are the same as in \Cref{tabA}. ‘--’ indicates non-convergence. Here, with the \textit{same training time}, the \textbf{best test accuracy of SIFT with last layer exit $>$ best test accuracy of SIFT with early exit $>$ best test accuracy of baseline}. 
}
  \label{tabC}
  \centering
    \begin{tabular}{|c|c|c|c|c|c|c|}
    \hline
    max\textunderscore lr & Baseline & \makecell{SIFT w/ \\ early exit} & \makecell{SIFT w/ last \\ layer exit} \\
    \hline
    $5e-2$ & $57.50$ & -- & -- \\
    $1e-2$ & $\bm{59.52}$ & $\bm{64.03}$ & $64.01$\\
    $5e-3$ & $58.88$ & $63.23$ & $\bm{66.47}$ \\
    $1e-3$ & $44.04$ & $59.29$ & $63.52$\\
   \hline   
    \end{tabular}
\end{table}

\begin{table}[!htbp]
\caption{\textbf{Food-101 with same backpropagation sample complexity.} All details are the same as in \Cref{tabB}. ‘--’ indicates non-convergence. In this case, with the \textit{same gradient complexity}, the \textbf{best test accuracy of SIFT with last layer exit $>$ best test accuracy of SIFT with early exit $>$ best test accuracy of baseline}.}
  \label{tabD}
  \centering
    \begin{tabular}{|c|c|c|c|c|c|c|}
    \hline
    max\textunderscore lr & Baseline & \makecell{SIFT w/ \\ early exit} & \makecell{SIFT w/ last \\ layer exit} \\
    \hline
    $5e-2$ & $57.50$ & -- & -- \\
    $1e-2$ & $\bm{59.52}$ & $\bm{64.04}$ & $64.19$\\
    $5e-3$ & $58.88$ & $63.22$ & $\bm{66.64}$ \\
    $1e-3$ & $44.04$ & $59.27$ & $63.68$\\
   \hline   
    \end{tabular}
\end{table}

\section{Conclusion and Limitations}
In this work, we theoretically characterized the benefits (as well as limitations) of the greedy approach of selecting samples with large approximate losses instead of exact losses. We also showed the promise of early exiting in speeding up the training of a transformer model. 

We will mention some limitations in our current work which we hope to explore and address in future work. As we mentioned, we did not pipeline the early exit forward propagation step for sample selection with the full forward and backward propagation steps for model update in our current implementation; doing so can yield bigger gains. In our current work, we have only shown the efficacy of SIFT on BERT and a modified version of ResNet. In the future, we hope to test SIFT on much larger transformer models. On the theory side, our current convergence result is for convex functions; we would like to derive a similar result for non-convex functions too.

\section*{Acknowledgements}
The authors are grateful to anonymous reviewers for their feedback which helped in improving this manuscript.

\section*{Impact Statement}
This paper presents work whose goal is to advance the field of machine learning. There are potential societal consequences of our work, none of which we feel must be specifically highlighted here.

\bibliography{refs,early_refs}
\bibliographystyle{icml2024}

\newpage
\appendix
\onecolumn

\begin{center}
    \textbf{\Large Appendix}\vspace{5mm}
\end{center}

\section{Some Preliminaries for the Proofs}
\begin{fact}[\citet{nesterov2018lectures}]
    \label{smoothness-bound}
    For an $L$-smooth function $g: \mathbb{R}^p \xrightarrow{} \mathbb{R}$, $\|\nabla g(\bm{y})\|_2^2 \leq 2 L (g(\bm{y}) - \min_{\bm{z} \in \mathbb{R}^p} g(\bm{z}))$ $\forall$ $\bm{y} \in \mathbb{R}^p$.
\end{fact}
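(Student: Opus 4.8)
The plan is to derive the bound from the standard quadratic upper bound (the \enquote{descent lemma}) that $L$-smoothness implies, and then to optimize this bound over a single gradient step. First I would record the consequence of $L$-smoothness that for all $\bm{y}, \bm{z} \in \mathbb{R}^p$,
\begin{equation*}
g(\bm{z}) \leq g(\bm{y}) + \langle \nabla g(\bm{y}), \bm{z} - \bm{y} \rangle + \frac{L}{2}\|\bm{z} - \bm{y}\|_2^2.
\end{equation*}
This is obtained by writing $g(\bm{z}) - g(\bm{y}) = \int_0^1 \langle \nabla g(\bm{y} + t(\bm{z}-\bm{y})), \bm{z}-\bm{y}\rangle\, dt$, adding and subtracting $\langle \nabla g(\bm{y}), \bm{z}-\bm{y}\rangle$ inside the integral, and bounding the leftover inner product via Cauchy--Schwarz together with the $L$-Lipschitzness of $\nabla g$.

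Next I would minimize the right-hand side over $\bm{z}$. It is a convex quadratic in $\bm{z}$ whose unique minimizer is the gradient step $\bm{z}^\star = \bm{y} - \frac{1}{L}\nabla g(\bm{y})$. Substituting $\bm{z} = \bm{z}^\star$ collapses the two gradient-dependent terms (the linear term contributes $-\frac{1}{L}\|\nabla g(\bm{y})\|_2^2$ and the quadratic term contributes $+\frac{1}{2L}\|\nabla g(\bm{y})\|_2^2$), yielding
\begin{equation*}
g\left(\bm{y} - \tfrac{1}{L}\nabla g(\bm{y})\right) \leq g(\bm{y}) - \frac{1}{2L}\|\nabla g(\bm{y})\|_2^2.
\end{equation*}

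Finally, since $\min_{\bm{z} \in \mathbb{R}^p} g(\bm{z}) \leq g\left(\bm{y} - \frac{1}{L}\nabla g(\bm{y})\right)$ by definition of the minimum, the displayed inequality gives $\min_{\bm{z}} g(\bm{z}) \leq g(\bm{y}) - \frac{1}{2L}\|\nabla g(\bm{y})\|_2^2$, and rearranging produces $\|\nabla g(\bm{y})\|_2^2 \leq 2L\big(g(\bm{y}) - \min_{\bm{z} \in \mathbb{R}^p} g(\bm{z})\big)$, as claimed. There is no genuine obstacle here: the only mildly delicate step is justifying the quadratic upper bound from the gradient-Lipschitz definition of smoothness (the integral argument above), after which the rest is an exact one-line optimization over the step-size. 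Indeed, the fact is quoted directly from \citet{nesterov2018lectures}, so one could alternatively simply invoke it.
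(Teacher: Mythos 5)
Your proof is correct: the quadratic upper bound from $L$-smoothness, minimized at the gradient step $\bm{z}^\star = \bm{y} - \frac{1}{L}\nabla g(\bm{y})$, followed by $\min_{\bm{z}} g(\bm{z}) \leq g(\bm{z}^\star)$ and rearrangement, is exactly the standard argument. The paper itself offers no proof of this fact---it simply cites \citet{nesterov2018lectures}---and your derivation coincides with the classical proof found there, so there is nothing to reconcile.
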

\begin{fact}[\textbf{Cantelli's inequality}]
    \label{cantelli}
    For any random variable $Y$ and any $t > 0$:
    $$\mathbb{P}\Big(Y - \mathbb{E}[Y] \geq t\Big) \leq \frac{\textup{Var}(Y)}{\textup{Var}(Y) + t^2}.$$
\end{fact}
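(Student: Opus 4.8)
The plan is to prove this classical one-sided tail bound (Cantelli's inequality) via a shift-and-square device followed by Markov's inequality and a one-variable optimization. First I would reduce to the centered case by setting $Z := Y - \mathbb{E}[Y]$, so that $\mathbb{E}[Z] = 0$ and $\textup{Var}(Z) = \textup{Var}(Y) =: \sigma^2$; the event of interest then becomes $\{Z \geq t\}$ with $t > 0$.

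Next, for an arbitrary auxiliary parameter $u \geq 0$, I would exploit that $t + u > 0$ to establish the set inclusion $\{Z \geq t\} \subseteq \{(Z + u)^2 \geq (t+u)^2\}$: on the former event we have $Z + u \geq t + u > 0$, and squaring is monotone on the nonnegative reals. Applying Markov's inequality to the nonnegative random variable $(Z+u)^2$ then yields
$$\mathbb{P}(Z \geq t) \leq \frac{\mathbb{E}[(Z+u)^2]}{(t+u)^2} = \frac{\sigma^2 + u^2}{(t+u)^2},$$
where I used $\mathbb{E}[(Z+u)^2] = \mathbb{E}[Z^2] + 2u\,\mathbb{E}[Z] + u^2 = \sigma^2 + u^2$, the cross term vanishing because $Z$ is centered.

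The final step, which carries the real content of the inequality, is to minimize the right-hand side over $u \geq 0$. Writing $g(u) := (\sigma^2 + u^2)/(t+u)^2$ and setting the numerator of $g'(u)$ to zero reduces, after factoring out $2(t+u)$, to the condition $ut - \sigma^2 = 0$, giving the optimal shift $u^{\star} = \sigma^2/t \geq 0$. Substituting $u^{\star}$ and simplifying collapses the bound to $\sigma^2/(\sigma^2 + t^2)$, which is exactly the claimed inequality. The main (and only mildly delicate) obstacle is spotting the shift-and-square trick together with the correct choice of $u^{\star}$; once the free parameter $u$ is introduced, the remainder is routine Markov plus elementary calculus, and one can even bypass the optimization entirely by plugging $u = \sigma^2/t$ directly and verifying the algebra.
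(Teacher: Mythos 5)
Your proof is correct. Note that the paper states Cantelli's inequality as a classical fact in its preliminaries and gives no proof of it at all (unlike the two facts following it), so there is no in-paper argument to compare against; your shift-and-square derivation --- the inclusion $\{Z \geq t\} \subseteq \{(Z+u)^2 \geq (t+u)^2\}$ valid for $u \geq 0$ since $t+u > 0$, followed by Markov's inequality applied to $(Z+u)^2$ and minimization of $(\sigma^2+u^2)/(t+u)^2$ at $u^{\star} = \sigma^2/t$ --- is the standard textbook proof, and the algebra checks out: $g'(u)$ has numerator $2(t+u)(ut-\sigma^2)$, and substituting $u^{\star}$ indeed collapses the bound to $\sigma^2/(\sigma^2+t^2)$. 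The only implicit hypothesis worth flagging is $\textup{Var}(Y) < \infty$ (needed for the Markov step); when the variance is infinite the claimed bound reads as $1$ and holds vacuously, so nothing is lost.
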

\begin{fact}[\citet{erf-bound-5}]
    \label{erf-bound}
    We have:
    $$\text{erf}(t) \leq \sqrt{1 - \exp\Big(-\frac{4 t^2}{\pi}\Big)}.$$
\end{fact}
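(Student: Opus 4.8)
The plan is to reduce the claim to a clean geometric statement about a planar Gaussian weight and then settle it with a rearrangement (bathtub-principle) argument, avoiding any delicate analysis. First I would dispose of the trivial regime $t \le 0$: there $\text{erf}(t) \le 0$ while the right-hand side is nonnegative, so the inequality holds automatically. For $t \ge 0$ both sides are nonnegative, so squaring is equivalence-preserving and the claim becomes $\text{erf}^2(t) \le 1 - \exp\!\big(-\tfrac{4t^2}{\pi}\big)$.

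Next I would express both sides as Gaussian integrals over regions in the first quadrant. Writing $\text{erf}(t) = \tfrac{2}{\sqrt{\pi}}\int_0^t e^{-u^2}\,du$ gives $\text{erf}^2(t) = \tfrac{4}{\pi}\iint_{S} e^{-(u^2+v^2)}\,du\,dv$, where $S = [0,t]^2$ is a square. On the other side, a polar computation yields $\tfrac{4}{\pi}\iint_{D} e^{-(u^2+v^2)}\,du\,dv = 1 - e^{-r_0^2}$, where $D = \{u,v \ge 0,\ u^2+v^2 \le r_0^2\}$ is a quarter disk; choosing $r_0 := 2t/\sqrt{\pi}$ makes this equal to exactly $1 - e^{-4t^2/\pi}$. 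Thus it suffices to prove $\iint_S e^{-\rho^2} \le \iint_D e^{-\rho^2}$, with $\rho^2 = u^2+v^2$.

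The crucial observation, and the step that explains the constant $4/\pi$, is that $S$ and $D$ have the same area: $|S| = t^2$ and $|D| = \tfrac{\pi}{4}r_0^2 = t^2$. Since the weight $e^{-\rho^2}$ is strictly decreasing in $\rho$, I would cancel the overlap and compare: $\iint_D e^{-\rho^2} - \iint_S e^{-\rho^2} = \iint_{D\setminus S} e^{-\rho^2} - \iint_{S\setminus D} e^{-\rho^2}$. Points of $D\setminus S$ lie inside the quarter disk and so carry weight $\ge e^{-r_0^2}$, whereas points of $S\setminus D$ lie outside it and carry weight $\le e^{-r_0^2}$. The equal-area identity forces $|D\setminus S| = |S\setminus D|$, whence $\iint_{D\setminus S} e^{-\rho^2} \ge e^{-r_0^2}|D\setminus S| = e^{-r_0^2}|S\setminus D| \ge \iint_{S\setminus D} e^{-\rho^2}$, giving $\iint_D \ge \iint_S$ and hence the desired bound.

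I expect the main conceptual hurdle to be spotting the equal-area reformulation; once the geometry is in place, the conclusion is a one-line rearrangement inequality with no estimates to tune, and the only routine checks are the two area computations and the harmless (measure-zero) treatment of the region boundaries. For completeness I would remark that a purely calculus-based route also exists, namely setting $g(t) := 1 - \text{erf}^2(t) - e^{-4t^2/\pi}$, verifying $g(0) = \lim_{t\to\infty} g(t) = 0$, and tracking the sign of $g'(t) = -\tfrac{4}{\sqrt{\pi}}\text{erf}(t) e^{-t^2} + \tfrac{8t}{\pi} e^{-4t^2/\pi}$; this works but is messier, as establishing nonnegativity of $g$ requires showing a transcendental equation has a unique positive root, which the geometric argument sidesteps entirely.
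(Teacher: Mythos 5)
Your proof is correct, and there is nothing in the paper to compare it against: the paper does not prove this fact at all, but imports it verbatim from the cited reference (noting only that the same bound appears in prior work), so your argument supplies a self-contained derivation where the paper defers to the literature. What you give is in fact the classical equal-area proof of this inequality (essentially P\'olya's): the reduction by squaring is valid since both sides are nonnegative for $t \ge 0$ (and the $t<0$ case is vacuous, as $\mathrm{erf}(t)\le 0$ there); the identity $\mathrm{erf}^2(t) = \tfrac{4}{\pi}\iint_{[0,t]^2} e^{-(u^2+v^2)}\,du\,dv$ is immediate from Fubini; the polar computation over the quarter disk of radius $r_0 = 2t/\sqrt{\pi}$ correctly yields $1 - e^{-4t^2/\pi}$; and the area match $|S| = |D| = t^2$ together with inclusion--exclusion gives $|D\setminus S| = |S\setminus D|$, after which the bathtub comparison is airtight because every point of $D\setminus S$ has $\rho \le r_0$ and every point of $S\setminus D$ has $\rho \ge r_0$, so the radially decreasing weight $e^{-\rho^2}$ is at least $e^{-r_0^2}$ on the former set and at most $e^{-r_0^2}$ on the latter. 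A pleasant feature of your route, which a citation hides, is that it explains why the constant $4/\pi$ is the right one: it is exactly the choice making the square and the quarter disk have equal area, which also shows the bound is tight to first order as $t \to 0$ and asymptotically as $t \to \infty$. Your closing remark about the calculus alternative is also accurate --- the derivative $g'(t) = -\tfrac{4}{\sqrt{\pi}}\,\mathrm{erf}(t)\,e^{-t^2} + \tfrac{8t}{\pi}\,e^{-4t^2/\pi}$ is correct, and that route does require a sign analysis of a transcendental equation that the geometric argument avoids.
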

The above bound has been also used by \citet{kamath_gauss_max}.
\begin{fact}
    \label{erf-bound-2}
    It holds that:
    $$\text{erfc}(t) \leq 2 \exp\Big(-\frac{t^2}{2}\Big).$$
\end{fact}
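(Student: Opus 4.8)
The plan is to reduce the claimed inequality to the sharper and cleaner bound $\text{erfc}(t) \le e^{-t^2}$ valid for $t \ge 0$, which then trivially implies the stated one with room to spare. Throughout I restrict to $t \ge 0$, which is the only regime that matters: in \Cref{thm-neural-net} the argument of $\text{erfc}$ is $\frac{\beta_j y}{2\sqrt{1-\beta_j^2}} \ge 0$ since $\beta_j \ge 0$ and $y \ge 0$. (The restriction is essential, since $\text{erfc}(t) \to 2$ as $t \to -\infty$ while the right-hand side vanishes, so the inequality genuinely fails for $t < 0$.)

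First I would start from the integral representation $\text{erfc}(t) = \frac{2}{\sqrt\pi}\int_t^\infty e^{-z^2}\,dz$ from \cref{erf} and perform the shift $z = t + u$, turning the integral into $\frac{2}{\sqrt\pi}\int_0^\infty e^{-(t+u)^2}\,du$. Expanding the square and pulling the $u$-independent factor out front gives $\frac{2}{\sqrt\pi}\,e^{-t^2}\int_0^\infty e^{-2tu}\,e^{-u^2}\,du$.

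The key (and essentially only) step is to use $t \ge 0$ and $u \ge 0$ to bound $e^{-2tu} \le 1$ pointwise, leaving the Gaussian integral $\int_0^\infty e^{-u^2}\,du = \frac{\sqrt\pi}{2}$. This yields $\text{erfc}(t) \le \frac{2}{\sqrt\pi}\,e^{-t^2}\cdot\frac{\sqrt\pi}{2} = e^{-t^2}$. Since $t^2/2 \le t^2$ implies $e^{-t^2} \le e^{-t^2/2} \le 2e^{-t^2/2}$, the claimed inequality follows immediately.

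There is no real obstacle here; the only thing requiring care is the domain restriction $t \ge 0$ that makes the pointwise bound $e^{-2tu} \le 1$ valid. If one prefers a derivation that avoids the shift, an equivalent Chernoff/exponential-tilting route works: for any $s \ge 0$, use $e^{s(z-t)} \ge 1$ on $[t,\infty)$ and extend the integral to all of $\mathbb{R}$ to obtain $\int_t^\infty e^{-z^2}\,dz \le \int_{-\infty}^\infty e^{s(z-t)-z^2}\,dz = \sqrt\pi\,e^{s^2/4 - st}$, and then optimize at $s = 2t$ to get $\text{erfc}(t) \le 2e^{-t^2}$, which already suffices. I would present the shift argument as the main proof, since it is the shortest and yields the tightest constant.
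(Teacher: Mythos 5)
Your proof is correct, but it takes a different route from the paper's. The paper converts the statement into a probabilistic one: it observes that $\text{erfc}(t) = 2\,\mathbb{P}(Z \geq t)$ for $Z \sim \mathcal{N}(0,1)$ and then simply invokes the standard (Chernoff-type) Gaussian tail bound $\mathbb{P}(Z \geq t) \leq e^{-t^2/2}$, which it does not re-derive. You instead work directly with the integral representation, and the shift $z = t+u$ combined with the pointwise bound $e^{-2tu} \leq 1$ gives you the strictly sharper estimate $\text{erfc}(t) \leq e^{-t^2}$ for $t \geq 0$, from which the stated bound follows with a factor of $2e^{-t^2/2}$ to spare. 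Your argument is more self-contained (it proves the tail bound rather than citing it) and yields a better constant and exponent; the paper's is shorter because it outsources the tail estimate to folklore. You are also right to flag the implicit restriction $t \geq 0$: the inequality fails as $t \to -\infty$, and the same caveat applies to the paper's proof since the quoted Gaussian tail bound is likewise only valid for $t \geq 0$; as you note, the only place \Cref{erf-bound-2} is used (in the proof of \Cref{thm-neural-net}) has a nonnegative argument because $\beta_j \geq 0$ and $y \geq 0$, so nothing downstream is affected.
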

\begin{proof}
    Let $Z \sim \mathcal{N}(0,1)$. It can be checked that $\text{erf}(t) = 2 \mathbb{P}(Z \in (0,t))$. Thus,
    \begin{equation}
        \label{eq:7-jan30}
        \text{erfc}(t) = 1 - \text{erf}(t) = 2\Big(\frac{1}{2} - \mathbb{P}\big(Z \in (0,t)\big)\Big) = 2 \mathbb{P}(Z \geq t).
    \end{equation}
    But using the Gaussian tail bound, we have $\mathbb{P}(Z \geq t) \leq e^{-t^2/2}$. Using this in \cref{eq:7-jan30} gives us the desired result.
\end{proof}
\begin{fact}
    \label{fact5-jan30}
    For any $t > 0$:
    $$\int_{0}^\infty \exp\Big(-\frac{y^2}{2 t^2}\Big) dy = \sqrt{\frac{\pi}{2}} t.$$
\end{fact}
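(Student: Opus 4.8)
The plan is to reduce this Gaussian-type integral to the half-line normalization of the standard Gaussian via a single change of variables, leveraging the error function already introduced in \cref{erf}. First I would substitute $u = y/(\sqrt{2}\,t)$, so that $y = \sqrt{2}\,t\,u$ and $dy = \sqrt{2}\,t\,du$; since $t > 0$, the limits $y: 0 \to \infty$ map to $u: 0 \to \infty$, and the exponent becomes $-y^2/(2t^2) = -u^2$. This transforms the integral into $\sqrt{2}\,t \int_0^\infty e^{-u^2}\,du$.

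Next I would evaluate the resulting one-sided Gaussian integral. By the definition of the error function in \cref{erf}, $\int_0^s e^{-z^2}\,dz = \frac{\sqrt{\pi}}{2}\,\text{erf}(s)$, and using the stated fact that $\lim_{s \to \infty} \text{erf}(s) = 1$ yields $\int_0^\infty e^{-u^2}\,du = \frac{\sqrt{\pi}}{2}$. Substituting this back gives $\sqrt{2}\,t \cdot \frac{\sqrt{\pi}}{2} = \sqrt{\pi/2}\,t$, which is exactly the claimed value. (Equivalently, one could recognize $\exp(-y^2/(2t^2))$ as proportional to a mean-zero Gaussian density of variance $t^2$ and use symmetry to halve the full-line normalization $\sqrt{2\pi}\,t$, but the \text{erf}-based route keeps everything internal to the quantities already defined in the paper.)

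There is no genuine obstacle here: the result is a routine change of variables combined with the normalization of the Gaussian on the half-line. The only minor points requiring care are confirming that $t > 0$ preserves the orientation of the integration limits under the substitution (so that no sign flip occurs) and correctly tracking the constant factors contributed by $dy = \sqrt{2}\,t\,du$ and by the half-integral value $\frac{\sqrt{\pi}}{2}$. Both are immediate, so the proof should be a two-line computation.
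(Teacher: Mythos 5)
Your proof is correct. The paper's own argument instead recognizes the integrand as (a multiple of) the $\mathcal{N}(0,t^2)$ density and uses $\mathbb{P}(Z>0)=\tfrac{1}{2}$ to halve the full-line normalization $\sqrt{2\pi t^2}$ --- exactly the alternative you mention parenthetically --- so the two routes are equivalent, both resting on the same Gaussian normalization constant, and your substitution-plus-$\mathrm{erf}$ version is equally valid.
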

\begin{proof}
    Let $Z \sim \mathcal{N}(0,t^2)$. We have:
    \begin{equation}
        \int_{0}^\infty \exp\Big(-\frac{y^2}{2 t^2}\Big) dy = \sqrt{2 \pi t^2} \underbrace{\Bigg(\frac{1}{\sqrt{2 \pi t^2}} \int_{0}^\infty \exp\Big(-\frac{y^2}{2 t^2}\Big) dy\Bigg)}_{= \mathbb{P}(Z > 0) = \frac{1}{2}} = \sqrt{\frac{\pi}{2}} t. 
    \end{equation}
\end{proof}

\section{Proof of Theorem~\ref{thm-gsgd}}
\label{gsgd-pf}
\begin{proof}
Consider some $\bm{w}^{*} \in \Phi_F$. With a constant step-size, say $\eta$, we have for any iteration $k$:
\begin{equation}
    \label{eq:1}
    \mathbb{E}_{\mathcal{S}_k^{(R)}}\big[\|\bm{w}_{k+1} - \bm{w}^{*}\|^{2}\big] = \|\bm{w}_{k} - \bm{w}^{*}\|^{2} 
    - 2\eta \big \langle \mathbb{E}_{\mathcal{S}_k^{(R)}}\big[\nabla f(\bm{w}_k, \widehat{\bm{x}}_k)\big], \bm{w}_{k} - \bm{w}^{*} \big \rangle +
    \eta^{2}\mathbb{E}_{\mathcal{S}_k^{(R)}}\big[{\|\nabla f(\bm{w}_k, \widehat{\bm{x}}_k)\|^{2}}\big].
\end{equation}
Using \Cref{rho-def} in \cref{eq:1}, we get:
\begin{equation}
    \label{eq:1-jan15}
    \mathbb{E}_{\mathcal{S}_k^{(R)}}\big[\|\bm{w}_{k+1} - \bm{w}^{*}\|^{2}\big] = \|\bm{w}_{k} - \bm{w}^{*}\|^{2} 
    - 2\eta \langle \nabla \widehat{F}_R(\bm{w}_k), \bm{w}_{k} - \bm{w}^{*} \rangle +
    \eta^{2}\mathbb{E}_{\mathcal{S}_k^{(R)}}\big[{\|\nabla f(\bm{w}_k, \widehat{\bm{x}}_k)\|^{2}}\big].
\end{equation}
Using the convexity of $f(\bm{w}, \bm{x})$ w.r.t. $\bm{w}$ (\Cref{asmp-cvx}) and the fact that pointwise maximum of convex functions is also convex, we conclude that $\widehat{F}_R(\bm{w})$ is convex. Thus:
\begin{equation}
    \label{eq:2-new}
    \langle \nabla \widehat{F}_R(\bm{w}_k), \bm{w}_{k} - \bm{w}^{*} \rangle \geq \widehat{F}_R(\bm{w}_k) - \widehat{F}_R(\bm{w}^{*}) \geq \widehat{F}_R(\bm{w}_k) - \Delta_R,
\end{equation}
where the last step follows from the fact that $\sup_{\bm{w}^{*} \in \Phi_F}\widehat{F}_R(\bm{w}^{*}) \leq \Delta_R$.

Further, using \Cref{asmp-smooth}, \Cref{smoothness-bound} and \Cref{asmp-1}, we get:
\begin{equation}
    \label{eq:3-new}
    \mathbb{E}_{\mathcal{S}_k^{(R)}}\big[\|\nabla f(\bm{w}_k, \widehat{\bm{x}}_k)\|^{2}\big] \leq 2 L \mathbb{E}_{\mathcal{S}_k^{(R)}}\big[f(\bm{w}_k, \widehat{\bm{x}}_k)\big] \leq 2 L \widehat{F}_R(\bm{w}_k).
\end{equation}
Plugging \cref{eq:2-new} and \cref{eq:3-new} into \cref{eq:1}, we get:
\begin{equation}
    \label{eq:4-jan13}
    \mathbb{E}_{\mathcal{S}_k^{(R)}}\big[\|\bm{w}_{k+1} - \bm{w}^{*}\|^{2}\big] = \|\bm{w}_{k} - \bm{w}^{*}\|^{2} 
    - 2 \eta \big(1 - \eta L\big) \widehat{F}_R(\bm{w}_k)  + 2 \eta \Delta_R.
\end{equation}
Let us impose $\eta < \frac{1}{L}$ so that $1 - \eta L > 0$. From \Cref{rho-def}, we have $\widehat{F}_R(\bm{w}_k) \geq \rho_R^{*} F(\bm{w}_k)$. Using this above, we get:
\begin{equation}
    \label{eq:5-jan13}
    \mathbb{E}_{\mathcal{S}_k^{(R)}}\big[\|\bm{w}_{k+1} - \bm{w}^{*}\|^{2}\big] = \|\bm{w}_{k} - \bm{w}^{*}\|^{2}
    - 2 \rho_R^{*} \eta \big(1 - \eta L\big) F(\bm{w}_k) + 2 \eta \Delta_R.
\end{equation}
After some rearrangement, we get:
\begin{equation}
    F(\bm{w}_k) \leq \frac{\|\bm{w}_{k} - \bm{w}^{*}\|^{2} - \mathbb{E}_{\mathcal{S}_k^{(R)}}\big[\|\bm{w}_{k+1} - \bm{w}^{*}\|^{2}\big]}{2 \rho_R^{*} \eta \big(1 - \eta L\big)} + \frac{\Delta_R}{\rho_R^{*} \big(1 - \eta L\big)}.
\end{equation}
Next, we sum the above for all $k \in \{0,\ldots,K-1\}$ while taking expectation throughout. After dividing both sides of the resultant inequality by $\frac{1}{K}$, we get:
\begin{equation}
    \label{eq:9-jan13}
    \frac{1}{K}\sum_{k=0}^{K-1} \mathbb{E}[F(\bm{w}_k)] \leq \frac{\|\bm{w}_{0} - \bm{w}^{*}\|^{2}}{2 \rho_R^{*} \eta \big(1 - \eta L\big) K} + \frac{\Delta_R}{\rho_R^{*} \big(1 - \eta L\big)}.
\end{equation}
Let $\overline{\bm{w}}_K = \frac{1}{K} \sum_{k=0}^{K-1} \bm{w}_k$. Applying Jensen's inequality, we get $\mathbb{E}[F(\overline{\bm{w}}_K)] \leq \frac{1}{K}\sum_{k=0}^{K-1} \mathbb{E}[F(\bm{w}_k)]$. Using this in \cref{eq:9-jan13} gives us:
\begin{equation}
    \mathbb{E}\big[F\big(\overline{\bm{w}}_K\big)\big] \leq \frac{\|\bm{w}_{0} - \bm{w}^{*}\|^{2}}{2 \rho_R^{*} \eta \big(1 - \eta L\big) K} + \frac{\Delta_R}{\rho_R^{*} \big(1 - \eta L\big)}.
\end{equation}
Observe that this analysis holds for any $\bm{w}^{*} \in \Phi_F$, including the one that is closest to $\bm{w}_{0}$. Using this in the above bound gives us:
\begin{equation}
    \mathbb{E}\big[F\big(\overline{\bm{w}}_K\big)\big] \leq \frac{D_0^2}{2 \rho_R^{*} \eta \big(1 - \eta L\big) K} + \frac{\Delta_R}{\rho_R^{*} \big(1 - \eta L\big)},
\end{equation}
where $D_0 := \min_{\bm{w}^{*} \in \Phi_F} \|\bm{w}_{0} - \bm{w}^{*}\|$. 
\end{proof}

\section{Proof of Theorem~\ref{thm-approx-fn-val-jan21}}
\label{thm-approx-fn-val-jan21-pf}
\begin{proof}
    We have for $j \in \{1,2\}$, $\widetilde{f}(\bm{w}, \bm{x}^{(j)}) = {f}(\bm{w}, \bm{x}^{(j)}) \exp\Big(\mu(\bm{w}) + \sigma {\zeta}(\bm{w}, \bm{x}^{(j)})\Big)$, where ${\zeta}(\bm{w}, \bm{x}^{(j)})$ is i.i.d. random noise with mean 0 and variance 1. 
    For brevity of notation, let:
    $$\mu = \mu(\bm{w}) \text{ and } \widetilde{f}^{(j)} = \widetilde{f}(\bm{w}, \bm{x}^{(j)}), \text{ } f^{(j)} = {f}(\bm{w}, \bm{x}^{(j)}) \text{ and } {\zeta}^{(j)} = {\zeta}(\bm{w}, \bm{x}^{(j)}) \text{ for } j \in \{1,2\}.$$
    Without loss of generality, let $f^{(1)} \geq f^{(2)}$. 

    Let $j^{*} = \text{arg max}_{j \in [2]} \widetilde{f}^{(j)}$. Let us consider the case of $j^{*} = 1$. This happens when $f^{(2)} \exp(\mu + \sigma {\zeta}^{(2)}) \leq f^{(1)} \exp(\mu + \sigma {\zeta}^{(1)})$; this is equivalent to:
    \begin{equation}
        \sigma ({\zeta}^{(2)} - {\zeta}^{(1)}) \leq \log \frac{f^{(1)}}{f^{(2)}}.
    \end{equation}
    Let $Z = {\zeta}^{(2)} - {\zeta}^{(1)}$. Note that $Z$ is random variable with mean 0 and variance 2. As per the above discussion, we have:
    \begin{equation}
        \mathbb{P}\big(j^{*} = 1\big) = \mathbb{P}\Bigg(Z \leq \frac{1}{\sigma} \log \frac{f^{(1)}}{f^{(2)}}\Bigg) = 1 - \mathbb{P}\Bigg(Z > \frac{1}{\sigma} \log \frac{f^{(1)}}{f^{(2)}}\Bigg).
    \end{equation}
    Thus,
    \begin{equation}
        \mathbb{P}\big(j^{*} = 2\big) = \mathbb{P}\Bigg(Z > \frac{1}{\sigma} \log \frac{f^{(1)}}{f^{(2)}}\Bigg).
    \end{equation}
    Then, we have:
    \begin{flalign}
        \mathbb{E}_{\{\zeta^{(1)}, \zeta^{(2)}\}}\Big[f^{(j^{*})}\Big] 
        & = f^{(1)} \mathbb{P}\big(j^{*} = 1\big) + f^{(2)} \mathbb{P}\big(j^{*} = 2\big)
        \\
        & = f^{(1)} \Bigg(1 - \mathbb{P}\Bigg(Z > \frac{1}{\sigma} \log \frac{f^{(1)}}{f^{(2)}}\Bigg)\Bigg) + f^{(2)} \mathbb{P}\Bigg(Z > \frac{1}{\sigma} \log \frac{f^{(1)}}{f^{(2)}}\Bigg)
        \\
        \label{eq:30-jan26}
        & = f^{(1)} - \big(f^{(1)} - f^{(2)}\big) \mathbb{P}\Bigg(Z > \frac{1}{\sigma} \log \frac{f^{(1)}}{f^{(2)}}\Bigg).
    \end{flalign}
    Since $\mathbb{E}[Z] = 0$ and $\text{Var}(Z) = 2$, using Cantelli's inequality (\Cref{cantelli}), we have:
    \begin{equation}
        \mathbb{P}\Bigg(Z > \frac{1}{\sigma} \log \frac{f^{(1)}}{f^{(2)}}\Bigg) \leq \frac{2}{2 + \frac{1}{\sigma^2} \log^2 \frac{f^{(1)}}{f^{(2)}}}.
    \end{equation}
    Using this in \cref{eq:30-jan26}, we get:
    \begin{equation}
        \label{eq:31-jan21}
        \mathbb{E}_{\{\zeta^{(1)}, \zeta^{(2)}\}}\Big[f^{(j^{*})}\Big] \geq f^{(1)} - \big(f^{(1)} - f^{(2)}\big) \Bigg(\frac{2}{2 + \frac{1}{\sigma^2} \log^2 \frac{f^{(1)}}{f^{(2)}}}\Bigg).
    \end{equation}
    Let us define the function $h(t; \sigma) := \frac{2(1 - e^{-t})}{2 + \frac{t^2}{\sigma^2}}$ for $t \geq 0$. Then, note that: $$\big(f^{(1)} - f^{(2)}\big) \Bigg(\frac{2}{2 + \frac{1}{\sigma^2} \log^2 \frac{f^{(1)}}{f^{(2)}}}\Bigg) = f^{(1)} \times h\Bigg(\log \frac{f^{(1)}}{f^{(2)}}; \sigma\Bigg).$$ 
    From \Cref{lem-1-jan21}, we have $h(t; \sigma) \leq 0.72 \big(1 - e^{-\sqrt{2} \sigma}\big)$ for all $t \geq 0$, when $\sigma \leq \frac{1}{2\sqrt{2}}$. Using this above gives us:
    \begin{equation}
        \big(f^{(1)} - f^{(2)}\big) \Bigg(\frac{2}{2 + \frac{1}{\sigma^2} \log^2 \frac{f^{(1)}}{f^{(2)}}}\Bigg) \leq 0.72 \Big(1 - e^{-\sqrt{2} \sigma}\Big) f^{(1)},
    \end{equation}
    when $\sigma \leq \frac{1}{2\sqrt{2}}$.
    Plugging this into \cref{eq:31-jan21} gives us:
    \begin{equation}
        \mathbb{E}_{\{\zeta^{(1)}, \zeta^{(2)}\}}\Big[f^{(j^{*})}\Big] \geq \Big(1 - 0.72 \Big(1 - e^{-\sqrt{2} \sigma}\Big)\Big) f^{(1)}.
    \end{equation}
    Rewriting the above in terms of the full notation, we obtain:
    \begin{equation}
        \mathbb{E}_{\{{\zeta}(\bm{w}, \bm{x}^{(j)})\}_{j=1}^2}\Big[f\big(\bm{w}, \bm{x}^{(j^{*})}\big) \Big| j^{*} = \text{arg max}_{j \in [2]}\widetilde{f}(\bm{w}, \bm{x}^{(j)})\Big] \geq \Big(1 - 0.72 \Big(1 - e^{-\sqrt{2} \sigma}\Big)\Big) \max_{j \in [2]} {f}(\bm{w}, \bm{x}^{(j)}).
    \end{equation}
    Thus:
    \begin{equation}
        \widehat{F}_{2, \text{approx}}(\bm{w}) \geq \Big(1 - 0.72 \Big(1 - e^{-\sqrt{2} \sigma}\Big)\Big) \mathbb{E}_{\{\bm{x}^{(j)}\}_{j=1}^2}\Big[\max_{j \in [2]} {f}(\bm{w}, \bm{x}^{(j)})\Big] = \Big(1 - 0.72 \Big(1 - e^{-\sqrt{2} \sigma}\Big)\Big) \widehat{F}_2(\bm{w}).
    \end{equation}
    Hence, we also have:
    \begin{equation}
        \rho_{2, \text{approx}}(\bm{w}) = \frac{\widehat{F}_{2, \text{approx}}(\bm{w})}{F(\bm{w})} \geq \Big(1 - 0.72 \Big(1 - e^{-\sqrt{2} \sigma}\Big)\Big) \frac{\widehat{F}_2(\bm{w})}{F(\bm{w})} = \Big(1 - 0.72 \Big(1 - e^{-\sqrt{2} \sigma}\Big)\Big) \rho_{2}(\bm{w}),
    \end{equation}
    and:
    \begin{equation}
        \rho_{2, \text{approx}}^{*} = \inf_{\bm{w} \notin \Phi_F} \rho_{2, \text{approx}}(\bm{w}) \geq \Big(1 - 0.72 \Big(1 - e^{-\sqrt{2} \sigma}\Big)\Big) \inf_{\bm{w} \notin \Phi_F} \rho_{2}(\bm{w}) = \Big(1 - 0.72 \Big(1 - e^{-\sqrt{2} \sigma}\Big)\Big) \rho_{2}^{*}.
    \end{equation}
    
\end{proof}

\begin{lemma}
    \label{lem-1-jan21}
    Consider the function $h(t; \sigma) := \frac{2(1 - e^{-t})}{\big(2 + \frac{t^2}{\sigma^2}\big)}$ for $t \geq 0$ and $\sigma \leq \frac{1}{2\sqrt{2}}$. Then, we have:
    \begin{equation*}
        \max_{t \geq 0} h(t; \sigma) \leq 0.72 \Big(1 - e^{-\sqrt{2} \sigma}\Big).
    \end{equation*}
\end{lemma}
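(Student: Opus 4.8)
The plan is to bound $h(t;\sigma)$ uniformly in $t$ by a simpler quantity depending only on $\sigma$, and then compare that $\sigma$-dependent bound against the target $0.72(1-e^{-\sqrt{2}\sigma})$. The first step is to discard the exponential in the numerator using the elementary inequality $1 - e^{-t} \le t$, valid for all $t \ge 0$. This yields the pointwise bound $h(t;\sigma) \le g(t;\sigma) := \frac{2t}{2 + t^2/\sigma^2}$, so it suffices to control $\max_{t \ge 0} g(t;\sigma)$, since a pointwise bound transfers to the maxima.

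Next I would maximize $g(t;\sigma)$ explicitly. Rather than differentiate, I would invert and apply AM--GM: $\frac{1}{g(t;\sigma)} = \frac{1}{t} + \frac{t}{2\sigma^2} \ge 2\sqrt{\frac{1}{2\sigma^2}} = \frac{\sqrt{2}}{\sigma}$, with equality at $t = \sqrt{2}\,\sigma$. Hence $\max_{t \ge 0} h(t;\sigma) \le \max_{t \ge 0} g(t;\sigma) = \frac{\sigma}{\sqrt{2}}$.

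It then remains to prove the scalar inequality $\frac{\sigma}{\sqrt{2}} \le 0.72\,(1 - e^{-\sqrt{2}\sigma})$ for $\sigma \le \frac{1}{2\sqrt{2}}$. Substituting $u = \sqrt{2}\,\sigma$ (so that $u \in [0,\tfrac12]$), this is equivalent to $1 - e^{-u} \ge \frac{u}{1.44}$. Since $u \mapsto 1 - e^{-u}$ is concave, on $[0,\tfrac12]$ it lies above the chord joining $(0,0)$ and $(\tfrac12,\, 1 - e^{-1/2})$, i.e.\ $1 - e^{-u} \ge 2(1 - e^{-1/2})\,u \approx 0.787\,u$; since $0.787 > \frac{1}{1.44} \approx 0.694$, the inequality holds on the whole interval. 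Chaining the three steps gives the claim.

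I expect the only delicate point to be the last step: the constant $0.72$ and the restriction $\sigma \le \frac{1}{2\sqrt{2}}$ are precisely what make the linear maximizer $\frac{\sigma}{\sqrt{2}}$ fit under the concave target. The concavity/chord comparison is what keeps this from requiring a full transcendental analysis, and the crude bound $1 - e^{-t} \le t$ is affordable because there is slack: the target behaves like $0.72\sqrt{2}\,\sigma \approx 1.02\,\sigma$ near $0$, comfortably above $\frac{\sigma}{\sqrt{2}} \approx 0.71\,\sigma$, and the gap persists up to the endpoint (numerically $0.25 < 0.28$ at $\sigma = \frac{1}{2\sqrt{2}}$).
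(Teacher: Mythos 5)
Your proof is correct, and it takes a genuinely different route from the paper's. The paper works with the maximizer $t^{*}$ of $h(\cdot;\sigma)$ directly: it sets the derivative to zero, manipulates the resulting stationarity equation $e^{t^{*}} = 1 + \sigma^{2}/t^{*} + t^{*}/2$ via the series expansion of the exponential to trap $t^{*}$ in the window $[0.62\sqrt{2}\sigma,\ \sqrt{2}\sigma]$, and then bounds the two factors $1-e^{-t^{*}}$ and $2/(2+(t^{*})^{2}/\sigma^{2})$ separately --- which is how the factor $(1-e^{-\sqrt{2}\sigma})$ appears \emph{structurally} in the stated bound, with the window's lower end supplying the constant $0.72$. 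You instead relax the numerator via $1-e^{-t}\le t$, maximize the resulting rational function exactly by AM--GM to get the clean bound $\sigma/\sqrt{2}$, and then verify the single scalar inequality $u/2 \le 0.72(1-e^{-u})$ on $u\in[0,\tfrac12]$ by a concavity/chord comparison. Your argument is more elementary (no implicit equation, no series manipulation) and its final numerical step has comfortable slack ($0.787 > 0.694$), whereas the paper's final step $2/(2(1+0.62^{2})) \le 0.72$ is actually $0.7223$, a harmless rounding slip that is repaired by using the sharper constant $1/\sqrt{4e^{1/2}-5}\approx 0.79$ already available in its own derivation. What you give up is any information about where the maximum sits and how tight $0.72(1-e^{-\sqrt{2}\sigma})$ is as a description of $\max_t h$; the paper's localization of $t^{*}$ shows the bound has the right functional form, while yours only certifies the inequality. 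Both proofs use the restriction $\sigma \le \tfrac{1}{2\sqrt{2}}$ in an essential way, and both establish the lemma as stated.
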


\begin{proof}
Let $t^{*} = \text{arg max}_{t \geq 0} h(t; \sigma)$. Setting $\frac{{d} h}{{d} t}\big|_{t=t^{*}} = 0$, we obtain the following equation: 
\begin{equation}
    e^{t^{*}} = 1 + \frac{\sigma^2}{t^{*}} + \frac{t^{*}}{2}.
\end{equation}    
Using the series expansion of the exponential function above, we get:
\begin{equation}
    \label{eq:34-jan21}
    1 + \sum_{j=1}^\infty \frac{(t^{*})^j}{j!} = 1 + \frac{\sigma^2}{t^{*}} + \frac{t^{*}}{2} \implies (t^{*})^2 \underbrace{\Big(1 + 2 \sum_{j=1}^\infty \frac{(t^{*})^j}{(j+1)!}\Big)}_{:=\nu(t^{*})} = 2 \sigma^2.
\end{equation}
Since $\nu(t^{*}) \geq 1$, we conclude that:
\begin{equation}
    \label{eq:35-jan21}
    t^{*} \leq \sqrt{2} \sigma.
\end{equation}
Further, since $\sigma \leq \frac{1}{2\sqrt{2}}$, we also have $t^{*} \leq \frac{1}{2}$. Note that $\nu(t^{*})$ is an increasing function of $t^{*}$. So $\nu(t^{*}) \leq \nu\big(\frac{1}{2}\big)$. Also, by using the series expansion of the exponential function, it can be verified that $\nu(t^{*}) = \frac{2(e^{t^{*}} - 1)}{t^{*}} - 1$. Thus, we have:
\begin{equation}
    \nu(t^{*}) \leq \nu\Big(\frac{1}{2}\Big) \leq 4 e^{1/2} - 5.
\end{equation}
Using this in \cref{eq:34-jan21}, we get:
\begin{equation}
    \label{eq:37-jan21}
    t^{*} \geq \frac{\sqrt{2}\sigma}{\sqrt{4 e^{1/2} - 5}} \geq 0.62 \big(\sqrt{2} \sigma\big).
\end{equation}
Combining the bounds of \cref{eq:35-jan21} and \cref{eq:37-jan21}, we deduce that $t^{*} = c \big(\sqrt{2} \sigma\big)$, where $c \in [0.62, 1]$. Thus, $(1 - e^{-t^{*}}) \leq 1 - e^{-\sqrt{2} \sigma}$ and $\frac{2}{2 + \frac{t^2}{\sigma^2}} \leq \frac{2}{2(1 + 0.62^2)} \leq 0.72$. Using all of this, we obtain:
\begin{equation}
    \max_{t \geq 0} h(t; \sigma) = h(t^{*}; \sigma) \leq 0.72 \big(1 - e^{-\sqrt{2} \sigma}\big).
\end{equation}
\end{proof}

\section{Proof of Theorem~\ref{rho-bound}}
\label{rho-bound-pf}
We restate \Cref{rho-bound} before proving it.
\begin{theorem}
    \label{rho-bound-app}
    Suppose \Cref{asmp-GP} holds. Then:
    \begin{flalign*}
    \rho_R(\bm{w}) \geq \frac{\Big(\varepsilon^2(\bm{w}) + \Big({\frac{\pi}{2} \log \frac{R}{4 \log R}}\Big)\delta^2(\bm{w}) + \sqrt{2 \pi \log \frac{R}{4 \log R}} \varepsilon(\bm{w}) \delta(\bm{w})\Big) \Big(1 - \frac{1}{R}\Big)}{\varepsilon^2(\bm{w}) + \delta^2(\bm{w})}.
    \end{flalign*}
\end{theorem}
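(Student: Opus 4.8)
The plan is to reduce the claim to a lower bound on the expected maximum of $R$ squared Gaussians. Writing $e_j := \mathcal{M}(\bm{w}, \bm{x}^{(j)}) - \mathcal{M}(\bm{w}^{*}, \bm{x}^{(j)})$, \Cref{asmp-GP} gives $e_1,\ldots,e_R \underset{\text{iid}}{\sim} \mathcal{N}(\varepsilon, \delta^2)$ with $\varepsilon := \varepsilon(\bm{w})$ and $\delta := \delta(\bm{w})$, so that $f(\bm{w}, \bm{x}^{(j)}) = e_j^2$. Since $\mathbb{E}[e_j^2] = \varepsilon^2 + \delta^2$, we have $F(\bm{w}) = \varepsilon^2 + \delta^2$, and the claim is equivalent to $\widehat{F}_R(\bm{w}) = \mathbb{E}[\max_j e_j^2] \geq (\varepsilon + \nu\delta)^2 (1 - \tfrac{1}{R})$, after noting that the numerator factorizes as $\varepsilon^2 + \nu^2\delta^2 + 2\nu\varepsilon\delta = (\varepsilon + \nu\delta)^2$ (here $\nu := \nu(R)$). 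Because $e_j^2 = (-e_j)^2$ and $-e_j \sim \mathcal{N}(-\varepsilon, \delta^2)$, I may assume $\varepsilon \geq 0$ without loss of generality, as the target right-hand side only increases with $|\varepsilon|$.

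Next I would introduce a threshold. Write $e_j = \varepsilon + \delta z_j$ with $z_j \underset{\text{iid}}{\sim} \mathcal{N}(0,1)$ and let $Z_{\max} := \max_j z_j$. Taking $j_0$ to be an index attaining $Z_{\max}$ gives $\max_j e_j^2 \geq e_{j_0}^2 = (\varepsilon + \delta Z_{\max})^2$. Fixing the threshold at $\nu \geq 0$ and using $\varepsilon + \delta\nu \geq 0$, on the event $\{Z_{\max} \geq \nu\}$ we have $(\varepsilon + \delta Z_{\max})^2 \geq (\varepsilon + \delta\nu)^2$, hence
\begin{equation*}
\widehat{F}_R(\bm{w}) \geq (\varepsilon + \delta\nu)^2 \, \mathbb{P}\big(Z_{\max} \geq \nu\big).
\end{equation*}
It therefore suffices to prove $\mathbb{P}(Z_{\max} \geq \nu) \geq 1 - \tfrac{1}{R}$, i.e.\ $\mathbb{P}(z_1 < \nu)^R \leq \tfrac{1}{R}$.

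For the tail bound I would relate the Gaussian survival function to the error function via $\mathbb{P}(z_1 \geq \nu) = \tfrac{1}{2}\textup{erfc}(\nu/\sqrt{2})$ and apply \Cref{erf-bound}, which yields $\mathbb{P}(z_1 \geq \nu) \geq \tfrac{1}{2}\big(1 - \sqrt{1 - e^{-2\nu^2/\pi}}\big)$. The definition $\nu = \nu(R)$ is calibrated precisely so that $e^{-2\nu^2/\pi} = \tfrac{4\log R}{R}$; combined with the elementary inequality $1 - \sqrt{1-x} \geq x/2$ this gives $\mathbb{P}(z_1 \geq \nu) \geq \tfrac{\log R}{R}$. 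Consequently $\mathbb{P}(z_1 < \nu) \leq 1 - \tfrac{\log R}{R}$, so that
\begin{equation*}
\mathbb{P}(z_1 < \nu)^R \leq \Big(1 - \tfrac{\log R}{R}\Big)^R \leq e^{-\log R} = \tfrac{1}{R},
\end{equation*}
proving $\mathbb{P}(Z_{\max} \geq \nu) \geq 1 - \tfrac{1}{R}$. Plugging this back and dividing by $F(\bm{w}) = \varepsilon^2 + \delta^2$ gives the stated lower bound on $\rho_R(\bm{w})$.

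The main obstacle is this last tail step: establishing $\mathbb{P}(Z_{\max} \geq \nu) \geq 1 - \tfrac{1}{R}$ with the exact form of $\nu(R)$. The crux is that $\nu(R)$ is reverse-engineered so that the error-function upper bound of \Cref{erf-bound} produces exactly $e^{-2\nu^2/\pi} = 4\log R/R$, which then threads cleanly through $1 - \sqrt{1-x} \geq x/2$ and $(1 - \tfrac{\log R}{R})^R \leq \tfrac{1}{R}$. One must also check the regime of validity (namely $R$ large enough that $4\log R \leq R$, so that $\nu(R)$ is real and the square-root argument is nonnegative) and handle the reduction to $\varepsilon \geq 0$; everything else is routine bookkeeping.
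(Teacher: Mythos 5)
Your proposal is correct and follows essentially the same route as the paper's proof: reduce to lower-bounding $\mathbb{E}[\max_i Z_i^2]$ for i.i.d. $\mathcal{N}(\varepsilon,\delta^2)$ variables, pick the threshold $\nu(R)$ so that Fact~\ref{erf-bound} together with $1-\sqrt{1-x}\ge x/2$ and $(1-\tfrac{\log R}{R})^R\le \tfrac1R$ gives $\mathbb{P}(\max_i Z_i \ge \varepsilon+\delta\nu)\ge 1-\tfrac1R$, and then divide by $F(\bm{w})=\varepsilon^2+\delta^2$. Your explicit handling of the sign of $\varepsilon$ and of the regime $R\ge 4\log R$ is a small tidiness bonus over the paper's write-up, but the argument is the same.
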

\begin{proof}
Per \Cref{asmp-GP}, $\mathcal{M}(\bm{w}, \bm{x}) - \mathcal{M}(\bm{w}^{*}, \bm{x}) \underset{\text{iid}}{\sim} \mathcal{N}\big(\varepsilon(\bm{w}), \delta^2(\bm{w})\big)$ for $\bm{w} \neq \bm{w}^{*}$. 
Clearly,
\begin{equation}
    F(\bm{w}) = \mathbb{E}_{\bm{x}}\big[f(\bm{w},\bm{x})\big] = \varepsilon^2(\bm{w}) + \delta^2(\bm{w}),
\end{equation}
and
\begin{equation}
    \widehat{F}_R(\bm{w}) = \mathbb{E}_{\{\bm{x}^{(1)}, \ldots, \bm{x}^{(R)}\}}\Bigg[\max_{\bm{x} \in \{\bm{x}^{(1)}, \ldots, \bm{x}^{(R)}\}} f(\bm{w},\bm{x})\Bigg].
\end{equation}
For conciseness, we shall denote $\varepsilon(\bm{w})$ and $\delta(\bm{w})$ by just $\varepsilon$ and $\delta$, respectively. Also, let $Z_i = \mathcal{M}(\bm{w}, \bm{x}^{(i)}) - \mathcal{M}(\bm{w}^{*}, \bm{x}^{(i)})$ for $i \in [R]$. As per our concise notation, note that each $Z_i \underset{\text{iid}}{\sim} \mathcal{N}(\varepsilon, \delta^2)$. Hence:
\begin{equation}
    \label{eq:40-jan22}
    \widehat{F}_R(\bm{w}) = \mathbb{E}\Big[\max_{i \in [R]} Z_i^2\Big].
\end{equation}
We shall obtain a lower bound for $\mathbb{E}\big[\max_{i \in [R]} Z_i^2\big]$. Let $Y \sim \mathcal{N}(0,1)$. For $t > 0$, we have:
\begin{flalign}
    \mathbb{P}\Big(\max_{i \in [R]} (Z_i - \varepsilon) \leq t\Big) = \mathbb{P}\Big(\cap_{i \in [R]} (Z_i - \varepsilon) \leq t\Big) & = \Big(\mathbb{P}\big(Y \leq {t}/{\delta}\big)\Big)^R
    \\
    \label{eq:43-jan22}
    & = \Big(\frac{1}{2} + \frac{1}{2}\text{erf}\Big(\frac{t}{\sqrt{2}\delta}\Big)\Big)^R
    \\
    \label{eq:44-jan22}
    & \leq \Bigg(\frac{1}{2} + \frac{1}{2}\sqrt{1 - \exp\Big(-\frac{2 t^2}{\pi \delta^2}\Big)}\Bigg)^R
    \\
    \label{eq:45-jan22}
    & \leq \Bigg(\frac{1}{2} + \frac{1}{2}\Bigg(1 - \frac{1}{2}\exp\Big(-\frac{2 t^2}{\pi \delta^2}\Big)\Bigg)\Bigg)^R
    \\
    & = \Bigg(1 - \frac{1}{4}\exp\Big(-\frac{2 t^2}{\pi \delta^2}\Big)\Bigg)^R
    \\
    \label{eq:47-jan22}
    & \leq \exp\Big(-\frac{R}{4}\exp\Big(-\frac{2 t^2}{\pi \delta^2}\Big)\Big).
\end{flalign}
In \cref{eq:43-jan22}, the error function is as defined in \cref{erf}. \Cref{eq:44-jan22} follows from \Cref{erf-bound}. \Cref{eq:45-jan22} is obtained by using the fact that $\sqrt{1 - a} \leq 1 - \frac{a}{2}$ for all $a \in [0,1]$. \Cref{eq:47-jan22} follows from the fact that $1 - a \leq e^{-a}$ for all $a \in \mathbb{R}$. So, $\max_{i \in [R]} (Z_i - \varepsilon) \leq t$ with a probability of at most $\exp\Big(-\frac{R}{4}\exp\Big(-\frac{2 t^2}{\pi \delta^2}\Big)\Big)$. Let us choose $t = \delta \sqrt{\frac{\pi}{2} \log \frac{R}{4 \log R}}$. With this choice, we get:
\begin{equation*}
    \max_{i \in [R]} Z_i \geq \varepsilon + \delta \sqrt{\frac{\pi}{2} \log \frac{R}{4 \log R}} \text{ } \text{ w.p. } \geq 1 - \frac{1}{R}.
\end{equation*}
Thus,
\begin{flalign}
    \mathbb{E}\Big[\max_{i \in [R]} Z_i^2\Big] \geq \Bigg(\varepsilon + \delta \sqrt{\frac{\pi}{2} \log \frac{R}{4 \log R}}\Bigg)^2 \Big(1 - \frac{1}{R}\Big) = \Bigg(\varepsilon^2 + {\frac{\pi}{2} \log \frac{R}{4 \log R}}\delta^2 + \sqrt{2 \pi \log \frac{R}{4 \log R}} \varepsilon \delta\Bigg) \Big(1 - \frac{1}{R}\Big).
\end{flalign}
Plugging this into \cref{eq:40-jan22} and replacing $\varepsilon$ and $\delta$ with their complete notations, i.e., $\varepsilon(\bm{w})$ and $\delta(\bm{w})$, we get:
\begin{equation}
    \widehat{F}_R(\bm{w}) \geq \Bigg(\varepsilon^2(\bm{w}) + \Big({\frac{\pi}{2} \log \frac{R}{4 \log R}}\Big)\delta^2(\bm{w}) + \sqrt{2 \pi \log \frac{R}{4 \log R}} \varepsilon(\bm{w}) \delta(\bm{w})\Bigg) \Big(1 - \frac{1}{R}\Big).
\end{equation}
Hence:
\begin{equation}
    \rho_R(\bm{w}) = \frac{\widehat{F}_R(\bm{w})}{F(\bm{w})} \geq \frac{\Big(\varepsilon^2(\bm{w}) + \Big({\frac{\pi}{2} \log \frac{R}{4 \log R}}\Big)\delta^2(\bm{w}) + \sqrt{2 \pi \log \frac{R}{4 \log R}} \varepsilon(\bm{w}) \delta(\bm{w})\Big) \Big(1 - \frac{1}{R}\Big)}{\varepsilon^2(\bm{w}) + \delta^2(\bm{w})}.
\end{equation}
\end{proof}

\section{Proof of Theorem~\ref{thm-neural-net}}
\label{thm-neural-net-pf}
\begin{proof}
For $i \in \{1,2\}$, we have:
\begin{equation}
    \hat{y}_j^{(i)} = \text{sig}\big(\bm{\theta}^\top \bm{A}_j \bm{x}^{(i)}\big) \text{ and } \hat{y}^{(i)} = \hat{y}_k^{(i)} = \text{sig}\big(\bm{\theta}^\top \bm{B}_j \bm{A}_j \bm{x}^{(i)}\big).
\end{equation}
For i.i.d. samples $\bm{x}^{(1)}$ and $\bm{x}^{(2)}$ with labels $y^{(1)}$ and $y^{(2)}$, recall that $\ell_j^{(1)}$ and $\ell_j^{(2)}$ are the corresponding cross-entropy losses of the early predictions at the $j^{\text{th}}$ layer, i.e.,
\begin{equation}
    \ell_j^{(i)} = - y^{(i)}\log\big(\hat{y}_j^{(i)}\big) - (1-y^{(i)})\log\big(1-\hat{y}_j^{(i)}\big) \text{ for } i \in \{1,2\}.
\end{equation}
We are interested in:
\begin{equation*}
    p_{j} := \mathbb{P}_{\bm{x}^{(1)}, \bm{x}^{(2)}}\Big(\text{arg max}_{i \in [1,2]} \ell_j^{(i)} = \text{arg max}_{i \in [1,2]} \ell_k^{(i)}\Big).
\end{equation*}
Using symmetry, we get:
\begin{equation}
    \label{eq:59-jan30}
    p_{j} := \mathbb{P}_{\bm{x}^{(1)}, \bm{x}^{(2)}}\Big(\ell_j^{(1)} \geq \ell_j^{(2)} \Big| \ell_k^{(1)} \geq \ell_k^{(2)}\Big).
\end{equation}
Note that $p_k = 1$. Henceforth, we shall drop the subscript $\bm{x}^{(1)}, \bm{x}^{(2)}$ for conciseness. 

As per our notation in \Cref{asmp-gauss-early-exit}, recall that $\bar{y}^{(1)} = 2y^{(1)} - 1$ and $\bar{y}^{(2)} = 2y^{(2)} - 1$ are the centered labels of $\bm{x}^{(1)}$ and $\bm{x}^{(2)}$, respectively. It can be verified that:
\begin{equation}
    \label{eq:53-jan30}
    \ell_j^{(1)} \geq \ell_j^{(2)} \iff  \bm{\theta}^\top \bm{A}_j \Big(\bar{y}^{(1)} \bm{x}^{(1)} - \bar{y}^{(2)} \bm{x}^{(2)}\Big) \geq 0 \text{ and } \ell_k^{(1)} \geq \ell_k^{(2)} \iff  \bm{\theta}^\top \bm{B}_j \bm{A}_j \Big(\bar{y}^{(1)} \bm{x}^{(1)} - \bar{y}^{(2)} \bm{x}^{(2)}\Big) \geq 0.
\end{equation}
Let $\bm{z} := \bar{y}^{(1)} \bm{x}^{(1)} - \bar{y}^{(2)} \bm{x}^{(2)}$. As per \Cref{asmp-gauss-early-exit},  $\bm{z} \sim \mathcal{N}(\vec{0}_d, 2 \bm{\text{I}}_d)$. 

Using \cref{eq:53-jan30} and the definition of $\bm{z}$ followed by the application of Bayes' theorem in \Cref{eq:59-jan30}, we obtain:
\begin{equation}
    p_j = \mathbb{P}\Big(\bm{\theta}^\top \bm{A}_j \bm{z} \geq 0 \Big| \bm{\theta}^\top \bm{B}_j \bm{A}_j \bm{z} \geq 0 \Big) = \frac{\mathbb{P}\big(\bm{\theta}^\top \bm{A}_j \bm{z} \geq 0,  \bm{\theta}^\top \bm{B}_j \bm{A}_j \bm{z} \geq 0 \big)}{\mathbb{P}\big(\bm{\theta}^\top \bm{B}_j \bm{A}_j \bm{z} \geq 0\big)}.
\end{equation}
Since $\bm{z} \sim \mathcal{N}(\vec{0}_d, 2 \bm{\text{I}}_d)$, $\mathbb{P}\big(\bm{\theta}^\top \bm{B}_j \bm{A}_j \bm{z} \geq 0\big) = \frac{1}{2}$. Using this above, we get:
\begin{equation}
    \label{eq:56-jan30}
    p_j = 2 \mathbb{P}\Big(\bm{\theta}^\top \bm{A}_j \bm{z} \geq 0, \bm{\theta}^\top \bm{B}_j \bm{A}_j \bm{z} \geq 0 \Big).
\end{equation}
For ease of notation, let $u_1 = \bm{\theta}^\top \bm{A}_j \bm{z}$ and $u_2 = \bm{\theta}^\top \bm{B}_j \bm{A}_j \bm{z}$ and $\bm{u} = \begin{bmatrix}
u_1 \\
u_2
\end{bmatrix}$. Note $\bm{u}$ is a multivariate Gaussian random variable with $\mathbb{E}[\bm{u}] = \begin{bmatrix}
0 \\
0
\end{bmatrix}$ and
\begin{equation}
    \mathbb{E}[\bm{u}\bm{u}^\top] := \bm{\Sigma} =  2 \begin{bmatrix}
\|\bm{A}_j^\top \bm{\theta}\|_2^2 & \big \langle \bm{A}_j^\top \bm{\theta}, \bm{A}_j^\top \bm{B}_j^\top \bm{\theta} \big \rangle \\
\big \langle \bm{A}_j^\top \bm{\theta}, \bm{A}_j^\top \bm{B}_j^\top \bm{\theta} \big \rangle & \|\bm{A}_j^\top \bm{B}_j^\top \bm{\theta}\|_2^2
\end{bmatrix}
.
\end{equation}
Let $\alpha_1 = \|\bm{A}_j^\top \bm{\theta}\|_2$, $\alpha_2 = \|\bm{A}_j^\top \bm{B}_j^\top \bm{\theta}\|_2$ and $$\beta = \frac{\langle \bm{A}_j^\top \bm{\theta}, \bm{A}_j^\top \bm{B}_j^\top \bm{\theta} \rangle}{\|\bm{A}_j^\top \bm{\theta}\|_2 \|\bm{A}_j^\top \bm{B}_j^\top \bm{\theta}\|_2}.$$ In the theorem statement, we shall denote $\beta$ by $\beta_j$ to indicate the dependence on the layer number $j$; we drop the subscript $j$ here for conciseness. With this notation, we have:
\begin{equation}
    \bm{\Sigma} =  2 \begin{bmatrix}
\alpha_1^2 & \beta \alpha_1 \alpha_2 \\
\beta \alpha_1 \alpha_2 & \alpha_2^2
\end{bmatrix}
.
\end{equation}
The density function of $\bm{u}$ is therefore:
\begin{flalign}
    \phi(\bm{u}) & = \frac{1}{2 \pi \sqrt{\text{det}(\bm{\Sigma})}} \exp\Bigg(-\frac{1}{2} \bm{u}^\top \bm{\Sigma}^{-1} \bm{u}\Bigg) 
    \\
    & = \frac{1}{4 \pi \alpha_1 \alpha_2 \sqrt{1 - \beta^2}} \exp\Bigg(-\frac{1}{4(1-\beta^2)}\Big(\frac{u_1^2}{\alpha_1^2} - \frac{2\beta u_1 u_2}{\alpha_1 \alpha_2} + \frac{u_2^2}{\alpha_2^2}\Big)\Bigg).
\end{flalign}
Using this in \cref{eq:56-jan30}, we get:
\begin{flalign}
    p_j & = 2 \int_{0}^{\infty} \int_{0}^{\infty} \phi(\bm{u}) {d} u_1 {d} u_2
    \\
    & = \frac{1}{2 \pi \alpha_1 \alpha_2 \sqrt{1 - \beta^2}} \int_{0}^{\infty} \int_{0}^{\infty} 
    \exp\Bigg(-\frac{1}{4(1-\beta^2)}\Big(\frac{u_1^2}{\alpha_1^2} - \frac{2\beta u_1 u_2}{\alpha_1 \alpha_2} + \frac{u_2^2}{\alpha_2^2}\Big)\Bigg) {d} u_1 {d} u_2
    \\
    & = \frac{1}{2 \pi \alpha_1 \alpha_2 \sqrt{1 - \beta^2}} \int_{0}^{\infty} \exp\Big(-\frac{u_1^2}{4 \alpha_1^2}\Big) \Bigg(\int_{0}^{\infty} \exp\Bigg(-\frac{\big(u_2 - \big(\frac{\beta \alpha_2}{\alpha_1}\big) u_1\big)^2}{4 \alpha_2^2 (1-\beta^2)}\Bigg) d u_2 \Bigg) d u_1.
\end{flalign}
With some simple change of variables and some simplification, the above equation becomes:
\begin{flalign}
    p_j & = \frac{1}{\pi \alpha_1} \int_{0}^{\infty} \exp\Big(-\frac{u_1^2}{4 \alpha_1^2}\Big) \Bigg(\int_{-\frac{\beta u_1}{2 \sqrt{1-\beta^2} \alpha_1}}^\infty \exp(-t^2) dt \Bigg) du_1
    \\
    & = \frac{1}{2 \sqrt{\pi} \alpha_1} \int_{0}^{\infty} \exp\Big(-\frac{u_1^2}{4 \alpha_1^2}\Big) \Bigg(\frac{2}{\sqrt{\pi}}\int_{0}^{\frac{\beta u_1}{2 \sqrt{1-\beta^2} \alpha_1}} \exp(-t^2) dt  + \frac{2}{\sqrt{\pi}}\int_{0}^\infty \exp(-t^2) dt\Bigg) du_1
    \\
    & = \frac{1}{2 \sqrt{\pi} \alpha_1} \int_{0}^{\infty} \exp\Big(-\frac{u_1^2}{4 \alpha_1^2}\Big) \Bigg(\text{erf}\Big(\frac{\beta u_1}{2 \sqrt{1-\beta^2} \alpha_1}\Big) + \underbrace{\lim_{y \to \infty} \text{erf}(y)}_{=1}\Bigg) du_1
    \\
    & = \frac{1}{2 \sqrt{\pi} \alpha_1} \int_{0}^{\infty} \exp\Big(-\frac{u_1^2}{4 \alpha_1^2}\Big) \text{erf}\Big(\frac{\beta u_1}{2 \sqrt{1-\beta^2} \alpha_1}\Big) du_1 + \underbrace{\frac{1}{2 \sqrt{\pi} \alpha_1} \int_{0}^{\infty} \exp\Big(-\frac{u_1^2}{4 \alpha_1^2}\Big) du_1}_{=\frac{1}{2} \text{ using \Cref{fact5-jan30}}}
    \\
    & = \frac{1}{2 \sqrt{\pi} \alpha_1} \int_{0}^{\infty} \exp\Big(-\frac{u_1^2}{4 \alpha_1^2}\Big) \text{erf}\Big(\frac{\beta u_1}{2 \sqrt{1-\beta^2} \alpha_1}\Big) du_1 + \frac{1}{2}.
\end{flalign}
Replacing $({u_1}/{\alpha_1})$ by $y$ above gives us:
\begin{flalign}
    p_j & = \frac{1}{2 \sqrt{\pi}} \int_{0}^{\infty} \exp\Big(-\frac{y^2}{4}\Big) \text{erf}\Big(\frac{\beta y}{2 \sqrt{1-\beta^2}}\Big) dy + \frac{1}{2}
    \\
    & = \underbrace{\frac{1}{2 \sqrt{\pi}} \int_{0}^{\infty} \exp\Big(-\frac{y^2}{4}\Big) dy}_{=\frac{1}{2} \text{ using \Cref{fact5-jan30}}} - \frac{1}{2 \sqrt{\pi}} \int_{0}^{\infty} \exp\Big(-\frac{y^2}{4}\Big) \text{erfc}\Bigg(\frac{\beta y}{2 \sqrt{1-\beta^2}}\Bigg) dy + \frac{1}{2}
    \\
    \label{eq:78-jan30}
    & = 1 - \frac{1}{2 \sqrt{\pi}} \int_{0}^{\infty} \exp\Big(-\frac{y^2}{4}\Big) \text{erfc}\Bigg(\frac{\beta y}{2 \sqrt{1-\beta^2}}\Bigg) dy.
\end{flalign}
This is the exact final expression for $p_j$. 

Using \Cref{erf-bound-2} fact above yields:
\begin{flalign}
    p_j & \geq 1 - \frac{1}{\sqrt{\pi}} \int_{0}^{\infty} \exp\Big(-\frac{y^2}{4}\Big) \exp\Big(-\frac{\beta^2 y^2}{8 {(1-\beta^2)}}\Big) dy 
    \\
    & = 1 - \frac{1}{\sqrt{\pi}} \int_{0}^{\infty} \exp\Bigg(-\frac{y^2}{8} \Big(\frac{2 - \beta^2}{1 - \beta^2}\Big)\Bigg) dy 
    \\
    \label{eq:81-jan30}
    & = 1 - \sqrt{\frac{2 - 2\beta^2}{2-\beta^2}}.
\end{flalign}
\Cref{eq:81-jan30} follows from \Cref{fact5-jan30}. 

Finally, the theorem statement follows by adding the subscript $j$ to $\beta$ in \Cref{eq:78-jan30} and \Cref{eq:81-jan30} (recall that we omitted the subscript $j$ earlier for conciseness).
\end{proof}

\subsection{Proof of Corollary~\ref{cor-beta-feb1}}
\begin{proof}
    Plugging in $\beta_j = 1 - \tau_j$ into the lower bound for $p_j$ in \Cref{thm-neural-net}, we get:
    \begin{flalign*}
        p_j \geq 1 - \sqrt{\frac{2 - 2(1 - \tau_j)^2}{2-(1 - \tau_j)^2}} = 1 - \sqrt{\frac{2 \tau_j (2-\tau_j)}{1 + \tau_j(2-\tau_j)}} = 1 - \mathcal{O}(\sqrt{\tau_j}),
    \end{flalign*}
    for $\tau_j \to 0$.
\end{proof}

\section{Remaining Experimental Details for Section~\ref{sec:bert}}
\label{expt-details}
For AdamW, we used the following hyper-parameter values: learning rate = 1e-4, $\ell_2$ weight decay = 0.01, $\beta_1$ = 0.9 and $\beta_2$ = 0.999. The learning rate warmup was over the first 0.2\% of total steps followed by linear decay. We used the GELU activation and a dropout probability of 0.1 on all the layers. The training loss is the sum of the mean masked LM likelihood and the mean next sentence prediction likelihood.

\begin{table*}[htbp]
\caption{Sampling time of SIFT as a function of early exit layer. For comparison, the full forward and backward propagation times (for the update) in all the cases is $\sim$ 3.8 hours and $\sim$ 7.9 hours, respectively.}
\vspace{0.2 cm}
  \label{tab:table2}
  \centering
    \begin{tabular}{|c|c|c|c|c|c|c|}
    \hline
    SIFT Criterion & SIFT Layer \# & Sampling Time (hrs) \\
    \hline
    Loss-based         & 1 & 0.8452\\
    Loss-based          & 2 & 1.1662\\
    Loss-based        & 3 & 1.4764 \\
    Loss-based           & 6 & 2.5901\\
    Loss-based          & 12 & 4.5086\\
    \hline
    Entropy-based           & 1 & 0.9121 \\ 
    Entropy-based         & 2 & 1.2181\\
    Entropy-based          & 3 & 1.5398\\
    Entropy-based         & 6 & 2.6195\\
    Entropy-based         & 12 & 4.5148\\
   \hline   
    \end{tabular}
\end{table*}

\section{Remaining Details for Section~\ref{resnet-50}}
\label{resnet-details}
\textbf{Modified ResNet architecture:} The vanilla ResNet architecture is not really amenable to the early exit idea we used for BERT. This is because, unlike BERT, the intermediate layer representations of vanilla ResNet are not of the same size as the final layer representations due to which we cannot use the linear classifier at the head for computing \enquote{early loss/entropy} like we did for BERT. So we slightly modify the vanilla ResNet architecture as follows: the output of our modified architecture = linear classifier at head ($L_f$) $\times$ final layer’s representation ($R_f$) + another linear classifier of appropriate size ($L_i$) $\times$ some intermediate layer’s representation ($R_i$) instead of just $L_f \times R_f$ which is the output of vanilla ResNet; note that $L_i$ is also trained. To compute the early loss/entropy, we use $L_i \times R_i$. For our experiments in \Cref{resnet-50}, we modify the vanilla ResNet-50 architecture as described above and the intermediate layer we use is the final output of the second block consisting of 128 filters. The modification described here for ResNets does indeed use an additional linear classifier ($L_i$) increasing the number of parameters (compared to the vanilla architecture), but we think it is worthwhile given the amount of improvement we get with SIFT. Moreover, the number of extra parameters is not too much for datasets like CIFAR-100, Food-101, etc., wherein the number of classes is of the order of 100. 

\textbf{Other empirical details:} Here, the baseline batch size is 125. The gross batch size of SIFT is 250, while the forward/backward batch size of SIFT is 125 (i.e., we perform the SIFT update on top 50\% of the samples in a batch just like \Cref{sec:bert}). We use the default values of $\beta_1$ and $\beta_2$ for AdamW and set the weight decay $=5e-4$.

\end{document}